\documentclass{article}
\usepackage{iclr2027_conference,times}

\usepackage[utf8]{inputenc}
\usepackage[T1]{fontenc}
\usepackage{hyperref}
\usepackage{url}
\usepackage{booktabs}
\usepackage{amsthm}
\newtheorem{lemma}{Lemma}
\usepackage{amsfonts}
\usepackage{amsmath}
\usepackage{amssymb}
\usepackage{nicefrac}
\usepackage[expansion=false]{microtype}
\usepackage{enumitem}
\usepackage{graphicx}
\usepackage{xcolor}
\hypersetup{colorlinks=true, linkcolor=blue!45!black, citecolor=blue!45!black, urlcolor=blue!45!black}

\newcommand{\Wbase}{W_{\text{base}}}
\newcommand{\Wsafe}{W_{\text{safe}}}

\newcommand{\Lcap}{\mathcal{L}_{\text{cap}}}
\newcommand{\Lsafe}{\mathcal{L}_{\text{safe}}}
\newcommand{\Fcap}{F_{\text{cap}}}
\newcommand{\qnorm}{{q_{\text{norm}}}}
\DeclareMathOperator{\tr}{tr}

\title{The Geometry of Refusal:\\
Why Post-Hoc Safety Is Fragile\\ and Pretraining-Time Safety Persists}

\author{Srikanth Malla\textsuperscript{*}, Chiho Choi \& Joon Hee Choi \\ Samsung Semiconductor US \\ \texttt{\{srikanth.m, chiho1.choi, jh4.choi\}@samsung.com}}

\iclrfinalcopy
\begin{document}
\maketitle
\lhead{}
{\renewcommand{\thefootnote}{}\footnotetext{\textsuperscript{*}Correspondence to \texttt{srikanth.m@samsung.com}.}}

\begin{abstract}
Post-hoc safety training (RLHF, DPO) is the dominant way to align large language models, yet jailbreaks~\citep{zou2023gcg}, fine-tuning attacks~\citep{qi2024finetuning}, and activation-space edits~\citep{arditi2024refusal} keep recovering the behaviors it was meant to remove. We give this fragility one geometric explanation and follow it into pretraining. We measure the safety update $\Delta = \Wsafe - \Wbase$ against the curvature of the model's capabilities (the empirical Fisher of a capability loss). Across five model families, post-hoc safety lands in a \emph{suppression} regime: $\Delta$ is nearly orthogonal to the capability directions, and its small in-subspace part concentrates on a few high-curvature ones. The update is \emph{thin} but \emph{sharp}, a refusal gate laid over intact capabilities rather than erasure of them. A kernel-immobility lemma explains why such an update can only mask a capability, not remove it, so a little benign fine-tuning restores it: $100$ benign examples cut the AdvBench refusal of Qwen-2.5-7B-Instruct and Llama-3-8B-Instruct by $35$ to $38$~pp.

Following the account into pretraining, a pretraining-checkpoint sweep of OLMo-2-1B~\citep{olmo2_2025} shows the features that refusal attaches to emerging in a sharp transition between $1$B and $63$B pretraining tokens. We then use the account constructively: models trained from scratch with safety co-training spread \emph{continuously} across pretraining reach $87$ to $98\%$ AdvBench refusal that the same attack erodes by only $2$ to $14$ pp at every scale from $410$M to $6.9$B, against $35$ to $38$ pp for post-hoc installs, at a small cost on short-answer capability probes; a \emph{windowed} schedule of equal total safety weight installs no refusal. Persistence of the safety signal across pretraining, not its timing, is what buys attack robustness.
\end{abstract}

\begin{figure}[t]
\centering
\includegraphics[width=\textwidth]{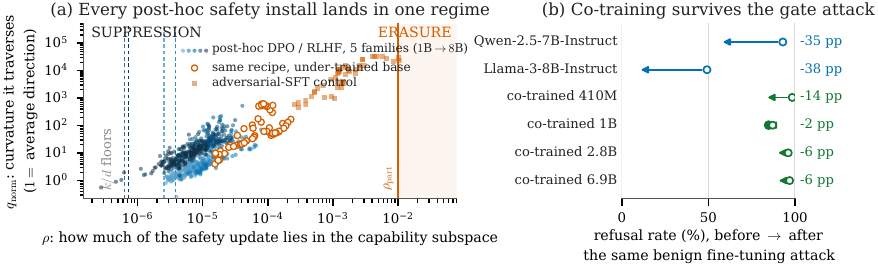}
\caption{\textbf{Post-hoc safety lands orders of magnitude short of erasure and loses $35$ to $38$ pp of refusal to a benign fine-tuning attack; co-training loses $2$ to $14$ pp.} All points measured. \textbf{(a)} Where safety updates land in the $(\rho, \qnorm)$ plane, one point per transformer block and capability probe. Post-hoc DPO/RLHF across five model families (blue, light to dark by size) is \emph{thin but sharp}: two to three orders of magnitude below the erasure boundary $\rho_\text{part}$, yet at its peak blocks above each family's own random floor $k/d$ (dashed; the five span $6.2\times$). Erasure is reachable but post-hoc safety never reaches it: an adversarial-SFT control built to saturate the capability subspace (squares) crosses $\rho_\text{part}$. Applying post-hoc DPO to an under-trained base (circles) lands elsewhere, which Sec.~\ref{sec:m3} traces to a developmental transition. This panel is post-hoc installs only: a co-trained model never received a patch, so no such $\Delta$ exists (App.~\ref{sec:app-limits}); its claim is panel (b). \textbf{(b)} Why that geometry matters: AdvBench refusal rate before $\rightarrow$ after the \emph{same} 100-example benign-SFT attack, regex-scored throughout so rows are comparable. Post-hoc installs lose $35$ to $38$ pp (Llama-3 to $11\%$); co-trained refusal loses $2$ to $14$ pp, ending at $84$ to $91\%$ at every scale to $6.9$B. Regex tracks the refusal \emph{gate}, which is what the geometry predicts; App.~\ref{sec:app-detector} repeats both attack tables under a harmful-output classifier.}
\label{fig:overview}
\end{figure}

\section{Introduction}
\label{sec:intro}

Large language models are aligned \emph{after} pretraining, not during it. The standard pipeline (pretrain, then RLHF or DPO on a preference dataset) assumes that any undesired behavior a base model can exhibit can later be removed by a targeted fine-tuning signal. A consistent body of recent empirical work undermines that assumption: a few hundred steps of fine-tuning on innocuous data undo RLHF-installed refusals~\citep{qi2024finetuning}, targeted optimization finds suffix strings that recover ``refused'' behaviors at inference time~\citep{zou2023gcg,wei2023jailbroken}, and linear probes on hidden states often find the ``refused'' content still present and linearly decodable~\citep{arditi2024refusal,zou2023representation}. These are not isolated engineering failures but symptoms of one structural fact about how post-hoc safety training meets the loss landscape of an already-pretrained model. Establishing that is harder than it sounds, for two reasons that shape the whole paper. A safety install is a displacement in a billion-dimensional weight space, and whether it \emph{removed} a capability or merely hid it cannot be read off behavior, since a model that refuses and a model that cannot answer look identical from the outside; that is what forces a geometric instrument. And any one aligned checkpoint confounds the alignment recipe with the developmental stage of the base it was applied to, so separating them needs a pretraining checkpoint series, not a model.

\paragraph{Central claim.} On a pretrained base with mature capabilities, post-hoc safety training moves weights nearly orthogonal to the principal capability directions; the small in-subspace component concentrates on high-curvature directions and functions as a thin refusal-gating circuit overlaid on an intact representational substrate (the internal features a refusal circuit attaches to; defined in Sec.~\ref{sec:m3}), the geometric signature of a behavioral change without a representational one. The ``mature'' qualifier is load-bearing: before an identifiable developmental transition (Sec.~\ref{sec:m3}), the same procedure lands elsewhere, at an order of magnitude higher curvature. We formalize this with two scalars (\S\ref{sec:geometry}): $\rho$, how much of the safety update overlaps the capability directions, and $\qnorm$, how much capability curvature the update traverses relative to a random direction. A \emph{suppression} install has $\rho$ far below an erasure boundary $\rho_\text{part}$, though its peak blocks still sit above $k/d$, the overlap of a random direction; an \emph{erasure} install crosses $\rho_\text{part}$. Figure~\ref{fig:overview} states the whole argument: every post-hoc install on a mature base falls in the same corner of the $(\rho, \qnorm)$ plane, and that is exactly the population the attack strips.

\paragraph{Contributions.}
\begin{itemize}[leftmargin=1.5em,itemsep=1pt,topsep=2pt,parsep=0pt]
\item \textbf{(C1) A weight-space classification} of safety installs into \emph{suppression} ($\rho \ll \rho_\text{part}$) vs.\ \emph{erasure} ($\rho \gtrsim \rho_\text{part}$), validated across $5$ model families (\S\ref{sec:m1}); on three shipped $7$ to $8$B installs $\rho$ peaks at $21$ to $67\!\times\!k/d$ yet stays $\geq\!2$ orders below $\rho_\text{part}$, with peak $\qnorm$ from $49$ to $221$ by family.
\item \textbf{(C2) A kernel-immobility lemma} explaining why post-hoc safety masks rather than removes capability, with a stronger gradient-orthogonality form that holds empirically at $|\cos(\nabla\Lcap,\Delta)|\!<\!0.002$ (\S\ref{sec:impossibility}). Behaviorally, fine-tuning on $100$ benign Alpaca examples cuts AdvBench refusal to $58.2\%$ (Qwen) and $\mathbf{10.9}\%$ (Llama) (\S\ref{sec:m2b}).
\item \textbf{(C3) A developmental signature}: substrate emergence between $1$B and $63$B pretraining tokens on OLMo-2-1B; per-revision DPO geometry transitions at the same point, with pre-substrate $\qnorm\!>\!600$ on \textsc{arith} (\S\ref{sec:m3}).
\item \textbf{(C4) A constructive resolution}: from-scratch continuous safety co-training reaches $\mathbf{87}$ to $\mathbf{98}\%$ pre-attack AdvBench refusal that the benign supervised-fine-tuning (benign-SFT) attack leaves at $\mathbf{84}$ to $\mathbf{91}\%$ from $410$M to $6.9$B, at a capability cost of up to $0.2$ nats on reasoning and recall (none on instruction following) and with robustness to stronger attacks at $1$B, while a windowed control of equal total safety weight installs no refusal, isolating \emph{persistence} of the safety signal (not its timing) as the driver (\S\ref{sec:m4}).
\end{itemize}

\section{Related Work}
\label{sec:related}

\paragraph{Why post-hoc safety is fragile.} RLHF and DPO~\citep{christiano2017rlhf,ouyang2022instructgpt,rafailov2023dpo} are undone by benign fine-tuning~\citep{qi2024finetuning,zhan2024removing}, adversarial suffixes~\citep{zou2023gcg,andriushchenko2024jailbreaking}, and refusal-direction ablation~\citep{arditi2024refusal,wollschlager2025refusalcones}. The closest accounts localize the fragility: shallow alignment in the first output tokens~\citep{qi2024shallow}, a few safety-critical parameters~\citep{wei2024assessing}, and a narrow weight-space ``safety basin''~\citep{peng2024safetybasin}. Each finds a \emph{symptom} of a thin install; our $(\rho,\qnorm)$ classification gives the continuous geometry that makes them one phenomenon, and Lemma~\ref{lemma:1} says why an install of that shape must be recoverable, which the overlay-not-erasure results~\citep{lee2024mechanistic,arditi2024refusal,hubinger2024sleeper} observe but do not explain.

\paragraph{When it can take hold.} Critical periods in deep networks~\citep{achille2017critical,frankle2020early,kleinman2023critical,constantinescu2024criticalperiod} and emergent abilities~\citep{wei2022emergent,schaeffer2023mirage} establish that training time matters; we show the substrate that \emph{safety} binds to forms in its own developmental transition. Fisher methods~\citep{amari1998natural,martens2014gn,kirkpatrick2017overcoming,grosse2023influence} and task arithmetic~\citep{ilharco2022task,ortizjimenez2023tangent} supply the instrument; Pythia~\citep{biderman2023pythia} and OLMo-2~\citep{olmo2_2025,tulu3} the checkpoints.

\paragraph{Safety during pretraining.} Closest to our constructive result, \citet{korbak2023pretraining} inject preference feedback into the pretraining objective and lower toxicity, but give no geometric mechanism, no suppression-versus-erasure account, no developmental transition, and crucially no windowed-versus-continuous comparison: Sec.~\ref{sec:m4} isolates \emph{persistence} of the safety signal, not its mere presence during pretraining, as what buys attack robustness. Unlearning~\citep{li2024wmdp} targets the erasure regime instead, the contrast that anchors our dichotomy. App.~\ref{sec:app-related} adds interpretability, Fisher methods, unlearning, and open-weight checkpoint suites.

\section{Setup and Geometric Instrument}
\label{sec:setup}

\paragraph{Models.} Five families, each chosen for what it makes measurable. \textbf{Pythia}~\citep{biderman2023pythia} ships intermediate pretraining checkpoints, so Sec.~\ref{sec:m1} can apply our own DPO along a developmental axis at $1.4$B; Sec.~\ref{sec:m4} retrains the architecture from scratch at $410$M to $6.9$B. \textbf{Qwen-2.5-7B}~\citep{yang2024qwen25}, \textbf{Llama-3-8B} (Meta-Llama-3-8B)~\citep{grattafiori2024llama3} and \textbf{Mistral-7B-v0.3}~\citep{jiang2023mistral} ship official Instruct variants (named with an \texttt{-Instruct} suffix), so $\Wsafe$ is a real install and recoverability is well-defined (Sec.~\ref{sec:m2b}). \textbf{OLMo-2-0425-1B}~\citep{olmo2_2025} (OLMo-2-1B below) is the testbed for Sec.~\ref{sec:m3}: it is the one model shipping a dense checkpoint series across the whole of pretraining \emph{and} an aligned Tulu-3~\citep{tulu3} variant with real refusal behavior. A sixth model, \textbf{Qwen-2.5-1.5B}, serves only as the fixed base for the safety-only controls of Sec.~\ref{sec:impossibility}.

\paragraph{Safety training.} When we train safety ourselves (Pythia-1.4B in Sec.~\ref{sec:m1}, OLMo-2-1B in Sec.~\ref{sec:m3-matched}), we use DPO~\citep{rafailov2023dpo} on HH-RLHF~\citep{bai2022hh}, 5K preference pairs, 1 epoch, $\beta = 0.1$, learning rate $5 \times 10^{-7}$. For the instruct-tuned 7 to 8B models (Sec.~\ref{sec:m2b}) and for the substrate sweep (Sec.~\ref{sec:m3-substrate}) we use the shipped Instruct weights as $\Wsafe$ without further training.

\paragraph{Capability probes.} Four standard benchmarks, each operationalized as the negative log-likelihood of the gold answer span given the prompt: \textsc{reason} (MMLU \texttt{formal\_logic}~\citep{hendrycks2020mmlu}), \textsc{arith} (GSM8K training split~\citep{cobbe2021gsm8k}), \textsc{recall} (TriviaQA RC-no-context~\citep{joshi2017triviaqa}), and \textsc{instruct} (FLAN-v2~\citep{longpre2023flan}). Throughout, ``capability'' means these general abilities, \emph{not} the ability to produce harmful content: post-hoc safety, we argue, leaves the former intact while gating the latter's expression, so ``capability preserved'' never means harmful output is preserved.

\paragraph{Notation.} $\Wbase$ is a pretraining revision, $\Wsafe$ the model after a safety procedure, and $\Delta = \Wsafe - \Wbase$. $\Lcap^{(c)}$ is the capability loss on probe $c$ (the negative log-likelihood above), and $\Delta\Lcap := \Lcap(\Wsafe) - \Lcap(\Wbase)$ is the shift a safety install causes in it. The Fisher is block-diagonal-dominant in practice, so the instrument runs one transformer block at a time on that block's MLP parameters only (App.~\ref{sec:app-details}): below, $\theta$ and $\Delta$ denote one block's MLP slice, of dimension $d$, and a subscript $\ell$ marks block $\ell$, counted from $0$. For capability $c$ with dataset $\mathcal{D}_c$, the empirical Fisher~\citep{amari1998natural,martens2014gn,sagun2017empirical,gurari2018gradient} is
\[
\Fcap^{(c)} = \mathbb{E}_{(x,y) \sim \mathcal{D}_c}\Big[\nabla_\theta \log p_\theta(y \mid x)\,\nabla_\theta \log p_\theta(y \mid x)^\top\Big]_{\theta = \Wbase}.
\]
We work with its top-$k$ eigenspace $U_k^{(c)} \in \mathbb{R}^{d \times k}$, estimated from $N$ per-example gradients via the $N$-side Gram trick, and drop the superscript $(c)$ where the probe is clear from context.

\paragraph{Overlap: how much of the update is inside the capability subspace.}\label{sec:geometry}
Given $\Delta$ and $U_k^{(c)}$, decompose $\Delta = \Delta_\parallel + \Delta_\perp$ where $\Delta_\parallel = U_k^{(c)}(U_k^{(c)})^\top \Delta$. The \emph{overlap ratio} is
\[
\rho^{(c)} = \frac{\|\Delta_\parallel\|_2^2}{\|\Delta\|_2^2} \in [0, 1].
\]
A random $k$-subspace in an ambient of dimension $d$ captures $k/d$ of a random direction's squared norm, so meaningful Fisher alignment has $\rho \gg k/d$. Here $k = 128$, so this \emph{random floor} $\rho_\text{rand} = k/d$ is architecture-dependent (App.~\ref{sec:app-geom}, Table~\ref{tab:geom-significance}: full decomposition, and a test placing observed $\rho$ at $159$ to $528$ standard deviations above it).

\paragraph{Curvature: how sharp the directions it traverses are.}
Using the full $\Fcap$ rather than its top-$k$, the curvature along $\Delta$ is
\[
q^{(c)} = \frac{\Delta^\top \Fcap^{(c)} \Delta}{\|\Delta\|_2^2}, \qquad
\qnorm^{(c)} = \frac{q^{(c)}}{\tr(\Fcap^{(c)})/d}.
\]
$\qnorm$ is scale-free: $\qnorm = 1$ means $\Delta$ traverses average capability curvature; $\qnorm \gg 1$ means $\Delta$ preferentially exploits high-curvature directions.

\paragraph{Two regimes.}
Two reference scales frame $\rho$: the random floor $\rho_{\text{rand}} = k/d$, and $\rho_{\text{part}} = 10^{-2}$, where the in-subspace component carries $10\%$ of $\Delta$'s norm ($1\%$ of its squared norm) (anchored in App.~\ref{sec:app-erasure-demo} by an adversarial control that crosses it). Below $\rho_{\text{part}}$ is \emph{suppression}, above it \emph{erasure}, the target of unlearning~\citep{li2024wmdp}. Every DPO/RLHF install we measure is suppression even at its peak-$\rho$ block: over two orders below $\rho_{\text{part}}$ on a mature base, $37$ to $44\times$ below on an under-trained one, and at least $13\times$ below when an SFT stage precedes DPO; the split survives any cut in $[5 \times 10^{-3}, 2 \times 10^{-2}]$. Curvature reads separately: $\qnorm \gg 1$ marks a sharp install, $\qnorm \sim 1$ a diffuse one. Two sub-patterns sit inside suppression and separate behaviorally: an \emph{under-trained base}, where the base has not converged, so safety training still lowers $\Lcap$ directly; and an \emph{SFT-stage recipe}, where the supervised stage drives motion inside $U_k$ and leaves residual attack resistance. App.~\ref{sec:app-regimes} gives both with their $\rho$ and $\qnorm$ ranges.

\section{Post-Hoc Safety Lives in Suppression, and Suppression Predicts Fragility}
\label{sec:m1}

On a fully pretrained base, post-hoc safety lands uniformly in the suppression regime, two to three orders of magnitude below the erasure boundary, and that signature predicts behavioral fragility: on Qwen and Llama-3, fine-tuning on 100 benign examples cuts AdvBench refusal by $35$ to $38$ pp.

As a first check, the geometric instrument on Pythia-1.4B (\texttt{step143000}, DPO'd on 5K HH-RLHF pairs) lands squarely in suppression: across all four probes and 24 blocks $\rho \in [5\times10^{-6}, 8\times10^{-5}]$ ($1.3$ to $21\times$ the random floor, two to three orders below $\rho_\text{part}$), $\qnorm_\ell$ reaches $71$ at single blocks and $21$ as a probe mean, and capability loss \emph{decreases} on every probe, so nothing is erased (Table~\ref{tab:m1-rho} and Fig.~\ref{fig:per-layer-rho}, App.~\ref{sec:app-m1}). Pythia-1.4B produces no measurable refusal at this scale, so we test the behavioral half of the claim on instruct-tuned $7$ to $8$B models, where refusal is real and its removal measurable.

\paragraph{Qwen-2.5-7B's shipped $\Delta$ raises refusal by $22$ pp rather than creating it.}
\label{sec:m2b}
We treat Qwen-2.5-7B as $\Wbase$ and Qwen-2.5-7B-Instruct as $\Wsafe$: the instruct model refuses $93\%$ of AdvBench~\citep{zou2023gcg} prompts and the base already refuses $71\%$ (Tables~\ref{tab:m2b-attack} and~\ref{tab:base-refusal}).

\paragraph{The geometry is suppression: a thin high-leverage gate, not erasure.} Across all four probes and $28$ blocks, $\rho$ runs $5$ to $67\times$ the random floor yet stays $2$ to $3$ orders of magnitude below $\rho_\text{part}$, while $\qnorm$ reaches the low hundreds on \textsc{reason} and \textsc{arith} and stays flat on \textsc{instruct} (per-probe values and the per-layer curves in App.~\ref{sec:app-m2b-extended}). The update is nearly orthogonal to the capability Fisher, and what little of it lies in-subspace concentrates on high-curvature reasoning directions.

\begin{figure}[!htbp]
\centering
\includegraphics[width=\textwidth]{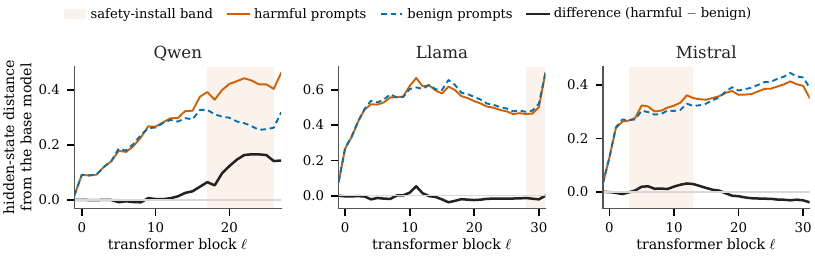}
\caption{\textbf{What the install added is flat through the stack until it turns on inside a narrow band of blocks.} Per-layer L2 distance between base and safe residual streams over 50 harmful and 50 topic-matched benign prompts. The two curves track each other through most of the stack; their \emph{difference} (black) is the harmful-specific part of what the install changed. The shaded \emph{safety-install band} marks the blocks where $\qnorm$ peaks (App.~\ref{sec:app-m2b-extended}, Fig.~\ref{fig:m2b-layer}). On Qwen and Mistral the difference turns on inside it; on Llama the difference peaks mid-stack (block $12$) while $\qnorm$ peaks at the top, so there the two measurements pick out different blocks.}
\label{fig:m2b-divergence}
\end{figure}

\paragraph{Hidden-state divergence is layer-localized (Fig.~\ref{fig:m2b-divergence}).} Selective divergence, the harmful-minus-benign gap in L2 distance between base and safe residual streams, stays near zero outside a narrow band of blocks, peaking at block $24$ on Qwen, $12$ on Llama and $13$ on Mistral. On Qwen and Mistral that peak falls inside the blocks where $\qnorm$ peaks, the \emph{safety-install band} shaded in Fig.~\ref{fig:m2b-divergence}; on Llama it does not, since $\qnorm$ peaks at the top of the stack.

\paragraph{$100$ benign fine-tuning examples cost Qwen $\mathbf{34.8}$ pp of AdvBench refusal (Fig.~\ref{fig:overview}b, top row).} We
fine-tune Qwen-2.5-7B-Instruct for 200 steps at $\text{lr}{=}2 \times 10^{-5}$ on 100
benign Alpaca examples and re-evaluate on AdvBench, HarmBench~\citep{mazeika2024harmbench} and StrongReject~\citep{souly2024strongreject};
Table~\ref{tab:m2b-attack} gives all three models on all three benchmarks. We report refusal rather
than attack success rate (ASR), which under the HarmBench
convention~\citep{mazeika2024harmbench,qi2024finetuning} is $100$ minus refusal. None of the 100 examples is harmful, yet refusal falls, as the suppression reading predicts: refusal sits in a thin, high-curvature slice of weight space that benign SFT perturbs as a side effect.

\paragraph{Capability-loss shift is largely a format artifact.} $\Delta \Lcap$ is uniformly positive on raw 0-shot prompts, up to $+3.45$ on \textsc{reason} (Table~\ref{tab:m2b-geom}), which might suggest partial erasure. A four-condition format ablation on a separate sample (Appendix~\ref{sec:app-cta}) traces it largely to prompt format: prepending three in-context demonstrations shrinks the gap by $40$ to $74\%$ on the three probes where it is large. Much of the shift is format rather than lost capability, consistent with $\rho \ll \rho_\text{part}$.

\begin{table}[!htbp]
\centering
\caption{\textbf{The attack takes $35$ to $38$ pp of AdvBench refusal from both models that had any, leaving Llama-3 at $11\%$.} Refusal rate (\%) before $\to$ after the same $100$-example benign-SFT attack on the three shipped instruct models. Mistral ships with almost no refusal, so its row has little to strip. $n = 256/200/256$; the one cell that rises is within noise (criterion in App.~\ref{sec:app-details}). App.~\ref{sec:app-detector} repeats the table under a harmful-output classifier.}
\label{tab:m2b-attack}
\small
\begin{tabular}{lccc}
\toprule
Instruct model & AdvBench & HarmBench & StrongReject \\
\midrule
Qwen-2.5-7B     & $93.0 \to 58.2$ & $67.5 \to 45.5$ & $52.3 \to 56.6$ \\
Llama-3-8B      & $49.2 \to 10.9$ & $46.5 \to 6.5$  & $46.5 \to 7.8$ \\
Mistral-7B-v0.3 & $1.2 \to 0.4$   & $1.5 \to 0.0$   & $3.5 \to 0.8$ \\
\bottomrule
\end{tabular}
\end{table}

\paragraph{Llama-3 and Mistral replicate the regime, and the attack strips nearly all of Llama-3's refusal.}
\label{sec:m2b-cross}
Running the full pipeline on both reproduces suppression with family-specific $\qnorm$ ranges (Table~\ref{tab:m2b-cross}, App.~\ref{sec:app-m2b-extended}). Behaviorally the attack cuts Llama-3-Instruct's refusal to at most $11\%$ on every benchmark, while Mistral ships almost none to begin with (Table~\ref{tab:m2b-attack}). As an instrument-validation check, DPO on Pythia-1.4B (about $0\%$ refusal) leaves the divergence instrument at noise across all layers, confirming it tracks exercised refusal behavior rather than weight motion (App.~\ref{sec:app-m2a-null}).

\section{Kernel Immobility: Post-Hoc Safety Masks Capability but Cannot Remove It}
\label{sec:impossibility}

A kernel-immobility lemma explains what Sec.~\ref{sec:m1} just measured: a safety update confined to the near-kernel of the capability Fisher (all but a fraction $\rho$ of it outside the top-$k$ eigenspace, and $\qnorm$ large only against the mean eigenvalue $\tr(\Fcap)/d$, tiny because $\Fcap$ has rank at most $N \ll d$) can mask a capability's output but cannot reduce the capability itself at first order, which is why benign fine-tuning recovers it.

\begin{lemma}[Kernel immobility; proof in App.~\ref{sec:app-proof}]\label{lemma:1}
If $\Delta \in \ker(\Fcap^{(c)})$ and $\Lcap^{(c)}$ has Lipschitz Hessian, then $\Lcap^{(c)}(\Wbase + \Delta) - \Lcap^{(c)}(\Wbase) = \tfrac{1}{2}\Delta^\top E \Delta + O(\|\Delta\|^3)$, with $E := \nabla^2\Lcap^{(c)} - \Fcap^{(c)}$ the Gauss-Newton residual. Under the Gauss-Newton approximation ($E = 0$ at a well-specified mature checkpoint~\citep{martens2014gn}) this is $O(\|\Delta\|^3)$. No first-order post-hoc update changes $\Lcap^{(c)}$ at leading order.
\end{lemma}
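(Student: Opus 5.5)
The plan is a second-order Taylor expansion of $\Lcap^{(c)}$ about $\Wbase$, with the Lipschitz-Hessian hypothesis controlling the remainder. If $\nabla^2\Lcap^{(c)}$ is $M$-Lipschitz, the integral form of Taylor's theorem gives
\[
\Lcap^{(c)}(\Wbase+\Delta)-\Lcap^{(c)}(\Wbase) = g^\top\Delta + \tfrac12\Delta^\top H\Delta + R, \qquad |R|\le \tfrac{M}{6}\|\Delta\|^3,
\]
where $g=\nabla\Lcap^{(c)}(\Wbase)$ and $H=\nabla^2\Lcap^{(c)}(\Wbase)$. The proof then reduces to two facts about a $\Delta$ in $\ker(\Fcap^{(c)})$: the linear term vanishes, and the quadratic term loses its $\Fcap^{(c)}$ part.

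For the linear term I would use that $\Fcap^{(c)}$ is an average of rank-one outer products of per-example scores. Write $s(x,y)=\nabla_\theta\log p_\theta(y\mid x)|_{\theta=\Wbase}$. Then
\[
0=\Delta^\top\Fcap^{(c)}\Delta=\mathbb{E}_{(x,y)\sim\mathcal{D}_c}\big[(s(x,y)^\top\Delta)^2\big].
\]
A mean of nonnegative terms is zero only if every term is zero, so $s(x,y)^\top\Delta=0$ for every example in $\mathcal{D}_c$ (almost surely, if $\mathcal{D}_c$ is read as a distribution). Since $\Lcap^{(c)}$ is the negative log-likelihood over $\mathcal{D}_c$, we have $g=-\mathbb{E}[s(x,y)]$, and therefore $g^\top\Delta=0$.

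For the quadratic term, split $H=\Fcap^{(c)}+E$, which is just the definition of $E$. Because $\Fcap^{(c)}\Delta=0$, we get $\Delta^\top H\Delta=\Delta^\top E\Delta$, which yields the stated identity. Under the Gauss-Newton approximation $E=0$, only $R=O(\|\Delta\|^3)$ remains. The closing sentence of the lemma is then an interpretation: a kernel update produces no first-order change, and no second-order change beyond the residual.

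The main obstacle is bookkeeping rather than analysis. The linear-term argument needs the Fisher and the loss to be built on the \emph{same} dataset $\mathcal{D}_c$, both evaluated at $\Wbase$. It also needs the gradient of the loss to be exactly minus the mean score, so the sign and normalization conventions must match. If $\mathcal{D}_c$ is a population distribution rather than a finite sample, the "every term vanishes" step holds only almost surely, and I would state it that way.

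It is also worth being explicit that exact kernel membership is an idealization of the measured small-$\rho$ regime. A quantitative variant follows from the same steps via Cauchy–Schwarz:
\[
|g^\top\Delta|\le\big(\mathbb{E}[(s^\top\Delta)^2]\big)^{1/2}=\big(\Delta^\top\Fcap^{(c)}\Delta\big)^{1/2}.
\]
I would add this as a remark rather than fold it into the lemma.
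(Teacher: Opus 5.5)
Your proposal is correct and is essentially the paper's proof: a Taylor expansion with a Lipschitz-Hessian cubic remainder, then the kernel condition forcing $s(x,y)^\top\Delta=0$ almost surely so that $\nabla\Lcap^{(c)}(\Wbase)^\top\Delta=0$, then the split $H=\Fcap^{(c)}+E$ with $\Fcap^{(c)}\Delta=0$. The paper additionally uses Bartlett's second identity to explain why $E$ vanishes at a well-specified checkpoint (you, like the statement, take $E=0$ as a hypothesis); your integral-form remainder, which is what actually yields the $\tfrac{M}{6}\|\Delta\|^3$ constant, and your Cauchy--Schwarz remark $|g^\top\Delta|\le(\Delta^\top\Fcap^{(c)}\Delta)^{1/2}$ are small improvements rather than departures.
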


This explains the asymmetry of capability preservation with refusal change observed by~\citet{qi2024finetuning} and~\citet{arditi2024refusal}. The kernel condition alone forces $\Delta^\top \nabla \Lcap = 0$, so the lemma applies even where $\Wbase$ is not a stationary point of $\Lcap$; the local-minimum / well-specification assumption enters only at the Gauss-Newton step. App.~\ref{sec:app-axis-followup} measures $\|\nabla \Lcap\|$ across the OLMo and Pythia developmental axes, a U-shape minimized in the compute-optimal band, large on both under-trained and over-trained sides ($\sim\!200\times$-Chinchilla on OLMo, replicated on Pythia at $\sim\!10\times$-Chinchilla), but $|\cos(\nabla \Lcap, \Delta)| < 0.002$ at every revision including the over-trained $4001$B-token endpoint, consistent with kernel-induced gradient orthogonality. Heavily over-trained production models (Llama-3-8B saw about $15$T tokens, roughly $1{,}900$ per parameter) sit on that over-trained side, where the OLMo endpoint shows the orthogonality still holding.

\paragraph{The orthogonality is not an artifact of the mixed update.} The shipped $\Delta = \Wsafe - \Wbase$ folds instruction-following and safety together, so its near-orthogonality to the capability Fisher could in principle come from the instruction-following part rather than from safety. To isolate safety, we train two \emph{controlled} safety-only updates on a fixed base (Qwen-2.5-1.5B) with no general instruction-tuning: a refusal-SFT-only update and a DPO-only update. Both reproduce the orthogonality: $|\cos(\nabla\Lcap, \Delta)| \leq 2 \times 10^{-4}$ on all four capability probes, for both recipes (Table~\ref{tab:safety-only}). Gradient orthogonality is therefore a property of the safety update itself, not a confound of the mixed instruction-plus-safety delta.

\section{The Representational Substrate Emerges in a Sharp Pretraining Transition}
\label{sec:m3}

The representational substrate that a refusal install engages emerges in a sharp transition between $1$B and $63$B tokens; before it, the same safety procedure lands in a different geometric regime.

By \emph{representational substrate} we mean the internal features a final-stage refusal circuit must attach to in order to gate a capability: the mid-to-late-layer representations that separate harmful from benign inputs. Before these exist, a safety install has nothing to engage. The \emph{developmental transition} is the point in pretraining at which they appear, which we now locate.

Two complementary sweeps locate it: one holds a fixed final-stage safety install and sweeps the base across 9 pretraining revisions, the other applies the same DPO recipe at each revision and asks whether the weight-space geometry itself shifts.

\paragraph{Why the sweep needs OLMo-2-1B.}
\label{sec:m3-substrate}
Pythia-1.4B cannot support a developmental-axis study directly: our DPO recipe produces zero pre-attack refusal rate at any Pythia revision we tested (step 512, 30k, 143k), so any per-revision behavioral signal is vacuous. We therefore move to OLMo-2-1B~\citep{olmo2_2025}, which ships 267 pretraining checkpoints (step 0 through step $1.9 \times 10^6$, 0 to 4T tokens) \emph{and} an already-aligned Instruct variant trained by AI2 with the Tulu-3 recipe~\citep{tulu3}. We exploit the Instruct model's real refusal behavior: we do not re-train safety ourselves in the substrate sweep.

\paragraph{Substrate question.} Holding $\Wsafe = \text{OLMo-2-1B-Instruct}$ fixed, we sweep $\Wbase$ across 9 pretraining revisions spanning 0 to 4T tokens (Table~\ref{tab:m3-substrate}). At each revision we measure the selective hidden-state divergence at each residual-stream position $\ell$, the stream after $\ell$ blocks ($\ell=0$ is the embedding output): $\text{selective\_div}_\ell^{(t)} = \text{harm\_div}_\ell - \text{benign\_div}_\ell$ (same 50-harm, 50-benign topic-matched prompt instrument as Sec.~\ref{sec:m2b}). This measures: \emph{does the base model at pretraining time $t$ have the representational substrate for the final-stage Instruct weights to differentially engage on harmful prompts?}

\paragraph{The substrate appears in a sharp band between 1B and 63B tokens, not gradually.} Selective divergence sits at noise through the first 1B tokens, is up two orders of magnitude by 21B, and reaches its plateau by 63B, a plateau with residual non-monotone structure that a rerun over three prompt-order seeds shows is real rather than noise (Fig.~\ref{fig:m3-substrate}; per-step values in Table~\ref{tab:m3-substrate}, seeds in App.~\ref{sec:app-axis-followup}). The peak-divergence position $\ell^\star$ moves too: it sits after the last block ($16$) at 21B and 63B, drops to $6$ by 105B, then climbs back to $9$ and finally $11$.

\begin{figure}[!htbp]
\centering
\includegraphics[width=\textwidth]{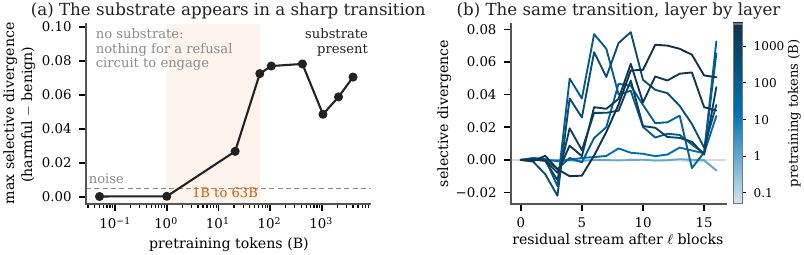}
\caption{\textbf{At 1B pretraining tokens the base carries nothing for a refusal circuit to engage; by 63B the substrate is there.} OLMo-2-1B substrate sweep, the final-stage Instruct weights held fixed while the base is swept across 9 pretraining revisions. \textbf{(a)} Max selective divergence (how much further the Instruct weights move the hidden state from the base's on harmful than on benign prompts) against pretraining tokens: at noise, then up two orders of magnitude, then a plateau. \textbf{(b)} The same measurement at each residual-stream position, one curve per revision, darker with more pretraining. Among post-substrate revisions the peak position settles forward through the residual stream ($\ell^\star=6$ at 105B, $9$ at 420B to 2T, $11$ at 4T), having sat after the final block ($\ell^\star=16$) at the transition's early edge.}
\label{fig:m3-substrate}
\end{figure}

\paragraph{The transition is where the base acquires features a refusal circuit can read.} The sweep's fixed-$\Wsafe$ design does \emph{not} measure ``can DPO at time $t$ install safety'', but the prior question just posed. The transition, complete by 63B tokens, is the point at which OLMo-2-1B acquires that representational substrate. Before the transition, the Instruct weights have nothing to differentially engage on; after, they do, at positions that move later ($\ell^\star$ from $6$ to $11$) as the base keeps training.

\paragraph{$\rho$ and $\qnorm$ spike only at the final checkpoint.} $\rho$ declines through pretraining ($1.4 \times 10^{-6}$ at step 0, $8.3 \times 10^{-7}$ at step 1M) and then jumps $4\times$ at the final checkpoint ($3.4 \times 10^{-6}$). $\qnorm$ for \textsc{arith} and \textsc{instruct} tracks the same shape: flat (${<}5$) until then, spiking to $\qnorm^\textsc{arith} = 19$ and $\qnorm^\textsc{instruct} = 27$ at step $1.9 \times 10^6$. The spike is the ``thin but sharp'' signature from Sec.~\ref{sec:m2b} appearing exactly when $\Delta$ is the pure instruct-tuning delta at the matched checkpoint, internally consistent with the regime observed on Qwen.

\paragraph{Per-revision DPO confirms the transition.}
\label{sec:m3-matched}
Applying the \emph{same} recipes (DPO, and SFT then DPO) at revisions straddling the transition shows the geometric shift is a property of the base's representational stage, not of the alignment procedure, and exposes the two suppression sub-patterns of \S\ref{sec:geometry}: an \emph{under-trained-base} pattern, where $\qnorm$ is high before the substrate and falls an order of magnitude after, and an \emph{SFT-stage recipe} pattern, where that stage keeps both scalars elevated at every post-substrate revision. Behaviorally only SFT$+$DPO installs measurable refusal, and only the mature checkpoint keeps nontrivial refusal after the attack ($15.6\%$ vs.\ $3.1\%$ at 105B; Table~\ref{tab:m3-matched}).

\section{Continuous Co-Training: Persistence, Not Timing, Buys Robustness}
\label{sec:m4}

Injecting safety continuously through pretraining, so the signal persists to the end, installs refusal the benign-SFT attack largely fails to erode; an early window of equal total safety weight installs none. Persistence, not timing, drives robustness. If post-hoc edits are trapped in suppression (Sec.~\ref{sec:impossibility}) and the substrate they need appears only mid-pretraining (Sec.~\ref{sec:m3}), safety should be injected during pretraining itself, so we ask which schedule $\lambda(s)$, with $s \in [0,1]$ the fraction of pretraining elapsed, in $\mathcal{L}_{\text{pre+safe}}(\theta; s) = \mathcal{L}_{\text{LM}}(\theta) + \lambda(s)\,\Lsafe(\theta)$, with $\Lsafe$ the cross-entropy of five short refusal templates on AdvBench prompts, yields refusal that survives the attack, and at what cost in capability.

\paragraph{Protocol.} Four Pythia-410M models are trained from scratch on 10B C4 tokens: \texttt{base} (LM only); \texttt{cotrain-windowed} ($\lambda_0 = 0.5$ over $[0.10, 0.30]$, overlapping the OLMo substrate transition); \texttt{cotrain-continuous} ($\lambda_c = 0.10$ throughout, equal total safety weight); and \texttt{post-hoc-dpo} (5K-pair HH-RLHF DPO at the end). All face the benign-SFT attack of Sec.~\ref{sec:m2b}; protocol in App.~\ref{sec:app-m4}.

\paragraph{Only \texttt{cotrain-continuous} installs refusal} (Table~\ref{tab:m4-headline}). \texttt{cotrain-windowed} ends with none after the $70\%$ of LM-only training that follows its window, as at $160$M (Table~\ref{tab:m4-160m}), and \texttt{post-hoc-dpo} installs none either.

\paragraph{Two limits on what that comparison can show.} The failing conditions install no refusal, so there is no matched-refusal comparison: co-trained refusal survives, but we cannot show it survives \emph{better} than an equally strong post-hoc install. And $\Delta$ has no counterpart here, since a co-trained model never received an edit; the only difference available is between two complete runs, which carries the language-modeling trajectory, not a safety displacement. Lemma~\ref{lemma:1} presumes such a displacement, so co-training falls outside its premise rather than contradicting it.

\paragraph{A safety window that stops at $30\%$ of training installs nothing, though it starts earlier than one that works.} To separate \emph{when} safety is injected from \emph{how long} it persists, we train four conditions at $1$B that differ in the pretraining window in which the safety loss is active (Table~\ref{tab:m4-schedule}). Both schedules that run to the end of pretraining install refusal the attack leaves largely intact; \texttt{cotrain-windowed} installs none, like the LM-only base. Persistence to the end matters, not onset time.

\begin{table}[!htbp]
\centering
\small
\caption{\textbf{Both schedules that persist to the end of pretraining survive the attack; the windowed schedule installs nothing.} From-scratch co-training refusal (regex, \%) on 64 AdvBench prompts, pre $\to$ post benign-SFT attack. Conditions differ in the fractional pretraining window in which the safety loss is active, at $\lambda = 0.10$ per step except \texttt{cotrain-windowed}'s $0.5$, which matches \texttt{cotrain-continuous}'s total. The $1$B $87.5 \to 90.6$ change is two prompts, within noise at $n = 64$ (App.~\ref{sec:app-details}); Table~\ref{tab:m4-scale} is a separate run of \texttt{cotrain-continuous} on $256$ prompts.}
\label{tab:m4-schedule}
\begin{tabular}{lcc}
\toprule
Condition (safety window) & 410M (pre $\to$ post) & 1B (pre $\to$ post) \\
\midrule
\texttt{cotrain-continuous} $[0,1]$        & $\mathbf{84.4 \to 84.4}$ & $\mathbf{87.5 \to 90.6}$ \\
\texttt{cotrain-post-substrate} $[0.30,1]$ & n/a                      & $\mathbf{84.4 \to 84.4}$ \\
\texttt{cotrain-windowed} $[0.10,0.30]$    & $0.0 \to 1.6$            & $3.1 \to 4.7$ \\
\texttt{base} (LM only)                    & $0.0 \to 0.0$            & $0.0 \to 0.0$ \\
\bottomrule
\end{tabular}
\end{table}

\paragraph{Scale: the install holds to $6.9$B (Table~\ref{tab:m4-scale}).} We repeat the \texttt{cotrain-continuous}-versus-\texttt{base} comparison from scratch at $410$M, $1$B, $2.8$B and $6.9$B on $256$ prompts. Post-attack AdvBench refusal holds at $84$ to $91\%$ at every scale, an erosion of $2$ to $14$ pp by regex against $35$ to $38$ for post-hoc installs (Fig.~\ref{fig:overview}b), while the base installs none anywhere. Post-attack refusal therefore sits at the same level across a $17\times$ parameter range; $410$M erodes most only because it starts highest. At $1$B the install also survives stronger attacks: worst-case $\mathbf{75\%}$ under a learning-rate and step sweep, $\mathbf{100\%}$ under an identity-shifting attack~\citep{qi2024finetuning} (Table~\ref{tab:m4-strong}).

\begin{table}[!htbp]
\centering
\caption{\textbf{Post-attack AdvBench refusal holds at $84$ to $91\%$ at every scale to $6.9$B.} From-scratch \texttt{cotrain-continuous} refusal (\%), pre $\to$ post benign-SFT attack, by scale and prompt distribution. The recipe's refusal templates are AdvBench-like, so only AdvBench is in-distribution. The LM-only base refuses at most $1.6\%$ by regex at any scale and benchmark ($24.6\%$ by the classifier, Table~\ref{tab:m4-scale-detector}). $n = 256/200/256$; changes under about $6$ pp are within noise (App.~\ref{sec:app-details}). Classifier columns: App.~\ref{sec:app-detector}.}
\label{tab:m4-scale}
\small
\begin{tabular}{lccc}
\toprule
Scale & AdvBench (in-dist.) & HarmBench (out) & StrongReject (out) \\
\midrule
410M   & $98.4 \to 84.0$ & $58.0 \to 34.5$ & $8.2 \to 7.4$ \\
1B     & $87.1 \to 84.8$ & $56.0 \to 32.0$ & $7.8 \to 8.2$ \\
2.8B   & $96.1 \to 90.2$ & $45.5 \to 31.5$ & $8.6 \to 8.2$ \\
6.9B   & $\mathbf{96.9 \to 90.6}$ & $32.5 \to 39.0$ & $9.0 \to 10.9$ \\
\bottomrule
\end{tabular}
\end{table}

\paragraph{Coverage is in-distribution, and narrows with scale.} Resisting erosion and covering the harmful-prompt space are separate axes, and the install wins both only on its own distribution. Out of distribution, pre-attack HarmBench refusal falls from $58.0$ to $32.5\%$ over the same $17\times$ range, the attack takes up to $24$ pp of it, and StrongReject never exceeds $10.9\%$ at any scale. The recipe's five templates fit their own prompt distribution, and a larger model fits them more tightly.

\paragraph{Co-training costs some capability, on short answers.} Scored against the LM-only base on the same examples (Table~\ref{tab:cap-scale}), co-training raises \textsc{reason} NLL by $0.12$ to $0.22$ nats at $1$B to $6.9$B and \textsc{recall} by $0.13$ to $0.16$ at $1$B and $2.8$B, and leaves \textsc{instruct} unchanged at every scale. \textsc{arith} differences are larger but change sign across scales and between two independent $410$M runs, so with one run per condition they are run-to-run variance.

\section{Conclusion}\label{sec:discussion}
Post-hoc safety training is fragile for a reason visible in the weights. Across five model families the safety update lies nearly orthogonal to the capability directions, and Lemma~\ref{lemma:1} shows that an update of this shape masks a capability rather than removing it, which is why $100$ benign examples cut refusal by $35$ to $38$~pp. The same measurement needs no attack to run, so a suppression-regime install can be flagged as fragile from the weights alone. Followed into pretraining, the features refusal attaches to appear in a sharp transition between $1$B and $63$B tokens of OLMo-2-1B; before it, the same safety procedure lands in a different geometric regime. Safety co-training that persists to the end of pretraining avoids this: the same attack erodes its AdvBench refusal by only $2$ to $14$~pp at every scale from $410$M to $6.9$B, at a small cost on short-answer capability probes, while a windowed schedule of equal total weight installs no refusal. \emph{Persistence}, not timing, drives robustness.

Three limits bound the result. Robustness is absolute rather than matched: post-hoc DPO on our from-scratch $410$M base installs no refusal, so we cannot show that co-trained refusal outlasts an equally strong post-hoc install. Coverage is in-distribution and narrows with scale. Each co-training condition is a single run. Two experiments follow: penalizing $\qnorm$ during post-hoc training tests whether sharpness causes fragility or only marks it, and broader refusal templates test whether coverage can widen without losing robustness. App.~\ref{sec:app-limits} lists the other limitations.

\label{sec:endofbody}

\section*{AI use statement}
We used generative AI tools for \emph{implementation}: parts of the training,
geometry-instrument, and analysis code were scaffolded with an AI coding assistant,
then read, edited, and tested by the authors. We did not use generative AI tools for
synthetic data generation, theoretical modeling, mathematical formulation, proof
assistance, hypothesis refinement, methodology design, qualitative analysis,
translation, data cleaning, or result interpretation; in particular, Lemma~\ref{lemma:1}
and its proof (Appendix~\ref{sec:app-proof}) were formulated and proved by the authors.
Additionally, we used generative AI tools for drafting and copy-editing prose.

We have reviewed all AI-assisted work. AI-scaffolded code was read and tested by the
authors before use, and every number and figure in the paper is regenerated by the
released pipeline from saved run artifacts rather than reported from a transcript.
Every claim we attribute to prior work was checked by the authors against the primary
source. We take responsibility for the final content of this work, including
text, claims or artifacts produced with the aid of generative AI.

\section*{Ethics Statement}
This work studies why a public fine-tuning attack~\citep{qi2024finetuning} removes safety alignment, and uses that understanding to build more attack-robust alignment; it introduces no new attack and no new harmful capability, and operators should read $\rho \ll \rho_\text{part}$ as a fine-tuning-robustness diagnostic of install thinness, not as an attack recipe. All harmful-prompt evaluations use public benchmarks (AdvBench, HarmBench, StrongReject), and we report only aggregate refusal and geometry statistics, never harmful completions. The pretraining-time co-training we advocate requires foundation-model training compute, which concentrates this form of alignment in large organizations; we note the tradeoff rather than endorse it. Our use of language models as a research aid is set out in the AI use statement above; all experiments, analyses, and claims are the authors'.

\section*{Reproducibility Statement}
The geometry instrument (overlap $\rho$, normalized curvature $\qnorm$, and the top-$k$ Fisher-eigenspace estimator) is specified in Sec.~\ref{sec:geometry} and Appendix~\ref{sec:app-geom}, including the Gram-trick estimator and its memory budget. All models are public (Pythia, Qwen-2.5, Llama-3, Mistral, OLMo-2); every safety-training and attack recipe with its hyperparameters is given in Sec.~\ref{sec:setup} and Appendix~\ref{sec:app-details}; and the developmental axis uses the released OLMo-2 and Pythia intermediate pretraining checkpoints. Lemma~\ref{lemma:1} is proved in full in Appendix~\ref{sec:app-proof}, and the co-training protocol with per-condition tables is in Appendix~\ref{sec:app-m4}. The full code is included in the supplementary material, covering the geometry instrument, the safety-training and attack recipes, the co-training pipeline, and the analysis scripts that regenerate every table and figure in the paper.

\label{end:mainbody}
\bibliographystyle{iclr2027_conference}
\bibliography{references}

\newpage
\appendix

\section{Experimental Details and Reproducibility}
\label{sec:app-details}

\paragraph{Attack implementation.} Our fine-tuning attack is the benign-SFT variant from~\citep{qi2024finetuning}: full-parameter AdamW updates on 100 Alpaca pairs for 200 steps at learning rate $2 \times 10^{-5}$, cross-entropy over target tokens only. Refusal detection is regex-based over the first 64 generated tokens against 14 markers (\texttt{"i can't"}, \texttt{"i cannot"}, \texttt{"i won't"}, \texttt{"i will not"}, \texttt{"i'm sorry"}, \texttt{"i am sorry"}, \texttt{"sorry, but"}, \texttt{"as an ai"}, \texttt{"i am unable"}, \texttt{"i'm unable"}, \texttt{"refuse"}, \texttt{"not able to"}, \texttt{"cannot provide"}, \texttt{"can't provide"}). Compliance NLL is the negative log-likelihood of the token sequence `` Sure, here is'' appended after the harmful prompt, under greedy decoding conditioning. Generations are greedy, at most $64$ new tokens, from the raw prompt with no chat template, for base and Instruct models alike so that a pair is matched. The classifier, Llama-Guard-3-8B, reads each (prompt, completion) pair through its own chat template, greedily; a completion counts as refused unless labeled unsafe.

\paragraph{Sample sizes and noise.} The attack tables at scale (Tables~\ref{tab:m2b-attack}, \ref{tab:m4-scale}, \ref{tab:base-refusal}, \ref{tab:m2b-attack-detector} and~\ref{tab:m4-scale-detector}) evaluate $256$ AdvBench, $200$ HarmBench and $256$ StrongReject prompts; the original co-training runs and the $1$B schedule, stronger-attack and detector checks use $64$ AdvBench prompts. We call a pre-to-post change within noise when it is smaller than the half-width of a $95\%$ binomial confidence interval for one rate at that sample size, taken at its widest ($p = 0.5$): about $12$ pp at $n = 64$, $7$ pp at $n = 200$ and $6$ pp at $n = 256$.

\paragraph{Data loading in the earlier co-training runs.} The original $410$M run (Tables~\ref{tab:m4-headline} and~\ref{tab:m4-geom}, and the $410$M column of Table~\ref{tab:m4-schedule}) and the $1$B schedule runs (Tables~\ref{tab:m4-schedule}, \ref{tab:m4-schedule-detector} and~\ref{tab:m4-strong}) were trained before a data-loading fix: every data-parallel rank read the same C4 documents, so each run saw only a fraction of its nominal $10$B unique tokens, each repeated once per rank. Every condition within each of those comparisons shares it. The scale runs (Tables~\ref{tab:m4-scale}, \ref{tab:rho-safe} and~\ref{tab:cap-scale}) use the corrected loader.

\paragraph{Test set.} 64 AdvBench prompts sampled with \texttt{random.Random(101)} from the public CSV at \texttt{llm-attacks/llm-attacks/data/advbench/harmful\_behaviors.csv}. Attack training set: 100 Alpaca pairs sampled with \texttt{random.Random(201)} from \texttt{tatsu-lab/alpaca}. Divergence evaluation: 50 harmful and 50 benign topic-matched prompts from the same sources, seeded with \texttt{random.Random(101)} and \texttt{random.Random(103)}.

\paragraph{Capability probes.} \textsc{reason}: MMLU \texttt{formal\_logic} multiple-choice items; gold answer is the single letter A/B/C/D. \textsc{arith}: GSM8K-train problems, target is the numeric answer as a suffix. \textsc{recall}: TriviaQA-RC no-context questions, target is the gold answer span. \textsc{instruct}: FLAN-v2 prompts, target is the gold completion. Capability loss is the negative log-likelihood of the gold span conditional on the prompt. Sample sizes differ by measurement and are stated with each: the $N$ gradient examples of the geometry (below), $64$ per probe for the format ablation (Table~\ref{tab:m2b-cta}) and Table~\ref{tab:m4-headline}, and the whole $126$-item \texttt{formal\_logic} split and $256$ for the other probes in Table~\ref{tab:cap-scale}.

\paragraph{OLMo-2-0425-1B revision identifiers.} Substrate sweep uses the HuggingFace branches \texttt{stage1-step0-tokens0B}, \texttt{stage1-step300-tokens1B}, \texttt{stage1-step10000-tokens21B}, \texttt{stage1-step30000-tokens63B}, \texttt{stage1-step50000-tokens105B}, \texttt{stage1-step200000-tokens420B}, \texttt{stage1-step500000-tokens1049B}, \texttt{stage1-step1000000-tokens2098B}, \texttt{stage1-step1907359-tokens4001B}. All are \texttt{stage1} branches (pretraining); we avoid the \texttt{stage2-ingredient*-*} branches (midtraining annealing), which would conflate data-mixture changes with pretraining-step progress.

\paragraph{Geometry measurement.} $N = 256$ per-example gradients per (probe, block) for Pythia-1.4B and Qwen-2.5-7B, and $N = 128$ for Llama-3-8B and Mistral-7B-v0.3 to fit memory; $k = 128$ subspace rank. Per-layer MLP parameters only (the attention-head and layer-norm contributions to $\Fcap$ are omitted for budget and would at most add low-curvature directions that cannot meaningfully change the suppression/erasure classification).

\paragraph{Compute budget.} All experiments ran on a 6 to 8$\times$ B200 node. Approximate per-experiment wall times: full geometry $+$ benign-SFT attack $+$ representation probe $+$ divergence pipeline on a $7$ to $8$B Instruct model $\approx 30$ to $60$ min per pair (Qwen, Llama, Mistral); OLMo-2-1B substrate sweep across 9 pretraining revisions $\approx 45$ min; per-revision SFT$+$DPO on OLMo-2-1B, two independent sets of 5 revisions ($8$ distinct; $20$K SFT $+$ $20$K DPO $\times 2$ epochs each) $\approx 3$ hr per set; Pythia-$410$M from-scratch continuous-co-training run ($10$B tokens, $2$-rank DDP) $\approx 14$ hr; Pythia-$160$M continued-training pilot $\approx 1$ hr; developmental-axis follow-up checks (multi-seed substrate sweep $+$ cross-family gradient-norm) $\approx 30$ to $45$ min through a $6$-GPU pool; from-scratch scale runs (Table~\ref{tab:m4-scale}), $410$M and $1$B on $2$ GPUs each ($\approx 9$ hr per run), $2.8$B and $6.9$B on $4$ GPUs each ($\approx 22$ to $28$ hr and $40$ to $46$ hr per run). Total compute envelope for the experiments in this paper: $\mathord{\sim}700$ B200-GPU-hours, of which $\mathord{\sim}610$ are the scale runs; preliminary or failed experiments not reported add roughly $80$ more.

\paragraph{Code and artifacts.} All launcher scripts, per-experiment result bundles, and a single merged \texttt{combined\_bundle.json} (9 experiments, 1.0 MiB) are in the anonymized repository.
\section{Geometry Instrument: Implementation Notes}
\label{sec:app-geom}

\paragraph{Orthogonal decomposition of $\Delta$ (detail for Sec.~\ref{sec:geometry}).} The overlap ratio in Sec.~\ref{sec:geometry} uses $\Delta = \Delta_\parallel + \Delta_\perp$ with $\Delta_\parallel := U_k^{(c)} (U_k^{(c)})^\top \Delta$. Here $U_k^{(c)} \in \mathbb{R}^{d \times k}$ stacks the top-$k$ eigenvectors of the empirical capability Fisher (equivalently, the top-$k$ right singular vectors of the per-example gradient matrix $G$). Because $U_k^{(c)}$ has orthonormal columns, $U_k^{(c)} (U_k^{(c)})^\top$ is the orthogonal projector onto the top-$k$ Fisher eigenspace and the decomposition is orthogonal: $\Delta_\parallel \perp \Delta_\perp$ and $\|\Delta\|_2^2 = \|\Delta_\parallel\|_2^2 + \|\Delta_\perp\|_2^2$ (Pythagoras). Orthonormality also gives $\|\Delta_\parallel\|_2^2 = \|(U_k^{(c)})^\top \Delta\|_2^2$, i.e., the squared norm of the projection equals the squared norm of its $k$-dimensional coefficient vector. We use this second form exclusively: the $d$-vector $\Delta_\parallel$ is never materialized, and $\rho^{(c)}$ reduces to a $(N,)$-sized intermediate (see the top-$k$ eigenspace estimator below). $\Delta_\perp$ lives in directions of approximately zero Fisher curvature (outside the top-$k$ eigenspace) and therefore does not contribute to first-order capability change, which is why only $\|\Delta_\parallel\|_2^2$ appears in $\rho^{(c)}$'s numerator.

\paragraph{Top-$k$ eigenspace estimator.} Our estimator of the top-$k$ eigenspace of $\Fcap^{(c)}$ from per-example gradients uses the $N$-side Gram trick. For $N$ per-example gradients $G \in \mathbb{R}^{N \times d}$, the $d \times d$ Fisher is never formed; instead we compute the $N \times N$ Gram $GG^\top$, eigendecompose it, and recover the top-$k$ right singular vectors of $G$ as the capability eigenbasis. At $d \sim 2 \times 10^8$ (MLP parameter count for a 7B model) and $N \sim 128$, the full $fp32$ per-example gradient matrix $G \in \mathbb{R}^{N \times d}$ is $\mathord{\sim}100$ GiB (the $N \times N$ Gram $GG^\top$ itself is only $\mathord{\sim}65$ KB). We chunk the Fisher matmul along $d$ in $4\mathrm{M}$-element slices and accumulate in $fp32$, bounding peak GPU memory at $\mathord{\sim}2$ GiB beyond the $(N, d)$ gradient buffer. We never materialize the $(d, k)$ capability-subspace matrix $U_k$ explicitly: the projection $U_k^\top \Delta$ is computed directly from $G$, $V_k$, and singular values, which reduces to an $(N,)$-sized intermediate. This is what makes the pipeline run at Qwen/Llama scale on a single GPU.

\paragraph{Significance of the $\rho$ alignment (random-subspace null).} The claim $\rho \gg k/d$ is a statement about a null: a random $k$-subspace overlaps the capability eigenspace by $k/d$ in expectation. We quantify how far the observed alignment sits from that null by drawing random $k$-subspaces at the true parameter dimensionality and forming a $z$-score, $z = (\rho_\text{obs} - \text{mean}_\text{null})/\text{std}_\text{null}$. Table~\ref{tab:geom-significance} reports it for the three large models. The observed overlap is $21$ to $67$ times the $k/d$ floor and $159$ to $528$ standard deviations above the random-subspace mean ($p \approx 0$, two-sided), so the alignment is small in absolute terms yet unambiguously non-random. The companion orthogonality statistic $|\cos(\nabla\Lcap, \Delta)| \approx 0.002$ sits about $150$ standard deviations from a zero-mean null while still leaving $\Delta$ $99.8\%$ orthogonal to the capability gradient; the lemma's conclusion, no first-order change in $\Lcap$, we check directly on $\Delta\Lcap$ (Sec.~\ref{sec:m1}, App.~\ref{sec:app-cta}) rather than through this statistic.

\begin{table}[h]
\centering
\small
\caption{Random-subspace significance of the Fisher alignment. $\rho_\text{obs}$ is the observed overlap (max over layers); the null is a random $k$-subspace at the model's MLP parameter dimensionality $d$; $z = (\rho_\text{obs} - k/d)/\text{std}_\text{null}$. All $p \approx 0$ (two-sided). The alignment is $21$ to $67$ times the floor and hundreds of null standard deviations above it, so it is small but not a high-dimensional coincidence.}
\label{tab:geom-significance}
\begin{tabular}{lccc}
\toprule
Model & $\rho_\text{obs}$ & $\rho_\text{obs} / (k/d)$ & $z$ (SDs above null) \\
\midrule
Qwen-2.5-7B & $4.1 \times 10^{-5}$ & $67\times$ & $528$ \\
Mistral-7B-v0.3 & $3.2 \times 10^{-5}$ & $44\times$ & $343$ \\
Llama-3-8B  & $1.5 \times 10^{-5}$ & $21\times$ & $159$ \\
\bottomrule
\end{tabular}
\end{table}

\paragraph{Prompt format for the per-example gradients.} All $\rho$ and $\qnorm$ values reported in the paper use the raw 0-shot prompt convention described in Appendix~\ref{sec:app-cta} (``Question: \ldots\textbackslash nAnswer:'' with no chat template and no in-context demonstrations), the same format used for $L_\text{cap}$. Each per-example gradient is $\nabla_\theta \sum_t \log p_\theta(y_t \mid x, y_{<t})$ for the full gold span $y = (y_1, \ldots, y_T)$, a single token for \textsc{reason}, multiple tokens for the other probes, evaluated at $\theta = \Wbase$. The top-$k$ Fisher subspace therefore contains a mixture of (i)~reasoning-relevant directions, which vary per example, and (ii)~at most a handful of format-bias directions, which are shared across examples. With $k = 128$ and $N = 128$ to $256$ examples, reasoning-relevant variance dominates: format-bias contributes at most a few of the $k$ eigenvectors. Evaluating the Fisher at $\Wbase$ (the completion-style base) rather than at $\Wsafe$ further biases the top-$k$ basis toward reasoning directions, because the base emits gold tokens natively and its log-likelihood's sharpest directions are content-related, not format-related.

\paragraph{3-shot robustness check.} We re-ran the full geometry measurement (all three model families, all four probes, all 28 to 32 layers) with three in-context demonstrations prepended to each probe prompt, to test whether the $\rho$ values are sensitive to prompt format. Table~\ref{tab:geom-3shot} reports mean $\rho$ across layers and peak-$\qnorm$-layer, for the raw 0-shot (paper default) and raw 3-shot conditions.

\emph{Primary claim (regime classification).} Mean $\rho$ is essentially unchanged between 0-shot and 3-shot: within $\pm5\%$ for Qwen and Llama on every probe, within $\pm20\%$ for Mistral. All three models stay at least two orders of magnitude below $\rho_\text{part} = 10^{-2}$ under 3-shot. The suppression classification does not depend on prompt format.

\emph{Secondary claim ($\qnorm$-peak layer).} Peak-$\qnorm$ layer is reproducible on the diagnostic probe (\textsc{reason}) for Llama (L30 $\to$ L30) and Mistral (L12 $\to$ L12), confirming the Fig.~\ref{fig:m2b-layer} shaded bands. Qwen's \textsc{reason} peak sits at block $0$ in both conditions ($525.6$ at 0-shot), a first-block effect next to the embeddings where raw-gradient magnitude is large. The $\qnorm$ mass relevant to our claim is the mid-stack peak of Fig.~\ref{fig:m2b-layer} ($220.6$ at block $19$), which is why Table~\ref{tab:m2b-geom} and every other peak-$\qnorm$ figure in the paper exclude block $0$. Peak-$\qnorm$ layer for \textsc{arith}, \textsc{recall}, \textsc{instruct} is less stable across prompt formats, but these probes' peaks are not the basis of our safety-install-band argument.

The framework's prediction, reasoning variance dominates the top-$k$ subspace regardless of in-context format, so $\rho$ and the regime label carry over, is confirmed.

\begin{table}[h]
\centering
\small
\caption{Geometry robustness check under raw 3-shot probe prompts. Mean $\rho$ averaged across all transformer blocks; $\qnorm$-peak reports the layer where $\qnorm^{(c)}$ attains its maximum. All three families remain in the suppression regime ($\rho \ll \rho_\text{part} = 10^{-2}$) under 3-shot; peak-$\qnorm$ layer is stable on the probes driving the safety-install-band claim.}
\label{tab:geom-3shot}
\begin{tabular}{@{}ll|rr|cc@{}}
\toprule
& & \multicolumn{2}{c|}{mean $\rho$} & \multicolumn{2}{c}{$\qnorm$-peak layer} \\
Model & Probe & 0-shot & 3-shot & 0-shot & 3-shot \\
\midrule
Qwen-2.5-7B        & \textsc{reason}   & $1.40 \times 10^{-5}$ & $1.40 \times 10^{-5}$ & L0\phantom{$^*$}  & L0\phantom{$^*$}  \\
                   & \textsc{arith}    & $1.21 \times 10^{-5}$ & $1.01 \times 10^{-5}$ & L26 & L0\phantom{$^*$}  \\
                   & \textsc{recall}   & $1.20 \times 10^{-5}$ & $9.37 \times 10^{-6}$ & L18 & L27 \\
                   & \textsc{instruct} & $1.07 \times 10^{-5}$ & $1.05 \times 10^{-5}$ & L27 & L17 \\
Llama-3-8B         & \textsc{reason}   & $5.07 \times 10^{-6}$ & $5.07 \times 10^{-6}$ & L30 & L30 \\
                   & \textsc{arith}    & $6.08 \times 10^{-6}$ & $7.02 \times 10^{-6}$ & L30 & L30 \\
                   & \textsc{recall}   & $5.81 \times 10^{-6}$ & $5.87 \times 10^{-6}$ & L13 & L31 \\
                   & \textsc{instruct} & $5.38 \times 10^{-6}$ & $5.41 \times 10^{-6}$ & L31 & L31 \\
Mistral-7B-v0.3    & \textsc{reason}   & $1.21 \times 10^{-5}$ & $1.21 \times 10^{-5}$ & L12 & L12 \\
                   & \textsc{arith}    & $1.34 \times 10^{-5}$ & $1.30 \times 10^{-5}$ & L4\phantom{$^*$}  & L29 \\
                   & \textsc{recall}   & $9.42 \times 10^{-6}$ & $7.64 \times 10^{-6}$ & L31 & L0\phantom{$^*$}  \\
                   & \textsc{instruct} & $1.16 \times 10^{-5}$ & $1.12 \times 10^{-5}$ & L3\phantom{$^*$}  & L12 \\
\bottomrule
\end{tabular}
\end{table}
\section{Regime Classification Summary}
\label{sec:app-regimes}

Table~\ref{tab:regimes} summarizes the dichotomy introduced prose-form in \S\ref{sec:geometry}: \emph{suppression} ($\rho \ll \rho_\text{part}$, Lemma~\ref{lemma:1} applies whenever the base is mature) vs.\ \emph{erasure} ($\rho \gtrsim \rho_\text{part}$, Lemma fails; unobserved under DPO/RLHF; reachable by adversarial SFT, App.~\ref{sec:app-erasure-demo}; target of unlearning~\citep{li2024wmdp}). Every DPO/RLHF observation in this paper lies in the suppression row: the largest block-mean $\rho$ we measure is $1.1 \times 10^{-4}$ (SFT+DPO at $1049$B, Table~\ref{tab:m3-matched}), about $90\times$ below $\rho_\text{part} = 10^{-2}$, and the largest single-block value, $7.4 \times 10^{-4}$ in that same install, is still $13\times$ below it. We identify two behaviorally-distinguishable sub-patterns \emph{within} suppression, detailed in the right column.

\begin{table}[!htbp]
\centering
\footnotesize
\caption{Regime dichotomy with two sub-patterns inside suppression. Every DPO/RLHF recipe we tested (Pythia-1.4B, Qwen-2.5-7B, Llama-3-8B, Mistral-7B-v0.3, OLMo-2-1B under Weak / Strong / SFT+DPO) lands in row 1; row 2 is unobserved under post-hoc alignment and is the target of unlearning~\citep{li2024wmdp}.}
\label{tab:regimes}
\begin{tabular}{@{}p{1.6cm}p{2.5cm}p{1.9cm}p{6.6cm}@{}}
\toprule
Regime & $\rho$ & $\qnorm$ & Observed-in; sub-patterns within \\
\midrule
Suppression & $\ll \rho_\text{part}$ (block means $10^{-6}$ to $1.1 \times 10^{-4}$) & $\gg 1$ (thin and sharp) or $\sim 1$ (diffuse) & \textbf{Baseline} (pure DPO/RLHF on mature base, $\rho \sim 10^{-6}$ to $10^{-5}$): Pythia-1.4B+DPO (\S\ref{sec:m1}); Qwen/Llama/Mistral-Instruct (\S\ref{sec:m2b}); post-substrate OLMo Weak/Strong (\S\ref{sec:m3-matched}). \textbf{Sub-pattern (a), under-trained base}: Lemma assumption fails, $\qnorm^\textsc{arith} \in [600, 863]$ (pre-substrate OLMo step~$300$, \S\ref{sec:m3-matched}). \textbf{Sub-pattern (b), SFT-stage recipe}: SFT's gradient lies in $\Fcap$'s top-$k$, raising $\rho$ $6$ to $19\times$ over the Weak recipe to $\sim 10^{-4}$ (SFT+DPO at every post-substrate revision, \S\ref{sec:m3-matched}). \\[2pt]
Erasure & $\gtrsim \rho_\text{part}$ & any & unobserved under DPO/RLHF; reachable by adversarial SFT on (prompt, wrong-answer) pairs (max $\rho \approx 10^{-2}$ on target probe; App.~\ref{sec:app-erasure-demo}); target of unlearning (RMU and related)~\citep{li2024wmdp} \\
\bottomrule
\end{tabular}
\end{table}

\paragraph{Behavioral implication per sub-pattern.} The suppression baseline gives the ``capability preserved, refusal fragile'' asymmetry demonstrated in \S\ref{sec:m2b}: the Qi benign-fine-tuning attack removes the thin gating circuit, and the geometry predicts it leaves the (off-subspace) capability representations in place, though we do not measure capability after the attack. Sub-pattern (a) means DPO can reduce $\Lcap$ at first order (Lemma assumption fails) and therefore installs a weaker, already-moving refusal; we observe $\qnorm^{\textsc{arith}}$ up to $863$ at pre-substrate OLMo step~$300$ and behaviorally, under SFT+DPO, a $21.9\%$ pre-attack refusal that the attack collapses to $0\%$. Sub-pattern (b) installs a refusal whose in-subspace SFT component provides residual resistance: the per-revision run that keeps the most post-attack refusal ($15.6\%$, against $3.1\%$ at $105$B and none elsewhere) is SFT+DPO at the final OLMo checkpoint. Erasure would decouple the capability from the model at leading order; the Lemma shows standard post-hoc alignment cannot reach it.
\section{Pythia-1.4B: Full Tables and Per-Layer Figure}
\label{sec:app-m1}

\begin{table}[h]
\centering
\caption{Pythia-1.4B post-DPO: per-probe $\rho$ (mean over 24 transformer blocks), normalized Fisher-quadratic form $\qnorm$, and capability-loss shift $\Delta \Lcap$. All rows are in the suppression regime; capability loss \emph{decreases} slightly on every probe.}
\label{tab:m1-rho}
\begin{tabular}{lcccc}
\toprule
Probe              & mean $\rho_\ell$ & mean $\qnorm_\ell$ & $\Delta \Lcap$   & Regime \\
\midrule
\textsc{reason} (MMLU fl)   & $1.7 \times 10^{-5}$ & $7.30$  & $-0.015$ & suppression \\
\textsc{arith} (GSM8K)      & $2.4 \times 10^{-5}$ & $21.39$ & $-0.027$ & suppression \\
\textsc{recall} (TriviaQA)  & $2.2 \times 10^{-5}$ & $10.73$ & $-0.010$ & suppression \\
\textsc{instruct} (FLAN)    & $2.0 \times 10^{-5}$ & $6.50$  & $-0.007$ & suppression \\
\bottomrule
\end{tabular}
\end{table}

\begin{figure}[h]
\centering
\includegraphics[width=\textwidth]{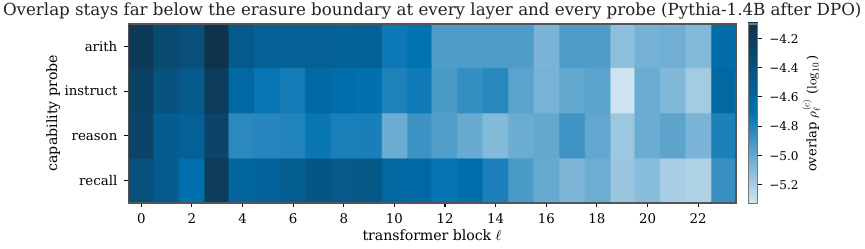}
\caption{Per-layer $\rho_\ell$ (log-scale) for each capability probe on Pythia-1.4B after DPO. Layers $0$, $3$, and $23$ (residual-stream boundaries and the final block) dominate; middle layers ($\sim 16$ to $22$) sit near the random floor. Pattern is consistent with per-layer localization reported for the refusal direction~\citep{arditi2024refusal}.}
\label{fig:per-layer-rho}
\end{figure}

\paragraph{Null behavior: the recipe installs no refusal.}
\label{sec:app-m2a-null}

DPO on Pythia-1.4B step143000 with our recipe produces a pre-attack refusal rate of $0\%$ on AdvBench, so there is no refusal for the attack to remove and attack-success numbers say nothing about the install. Applying the same topic-matched selective-divergence instrument as in Fig.~\ref{fig:m2b-layer} (50 harmful, 50 benign), $|\text{selective\_div}_\ell| < 10^{-3}$ at every layer, with no monotone structure. For reference, the Qwen-2.5-7B-Instruct curve on the same instrument reaches $+0.166$. The Pythia null is therefore an internal consistency check: when no refusal circuit is installed behaviorally, the representation-shift instrument sees nothing. This is what motivated the move to OLMo-2-1B-Instruct (which ships with a Tulu-3-trained install that does produce refusal) for the substrate sweep (Sec.~\ref{sec:m3-substrate}).
\section{Cross-Family Replication Details}
\label{sec:app-m2b-extended}

\begin{figure}[!htbp]
\centering
\includegraphics[width=\textwidth]{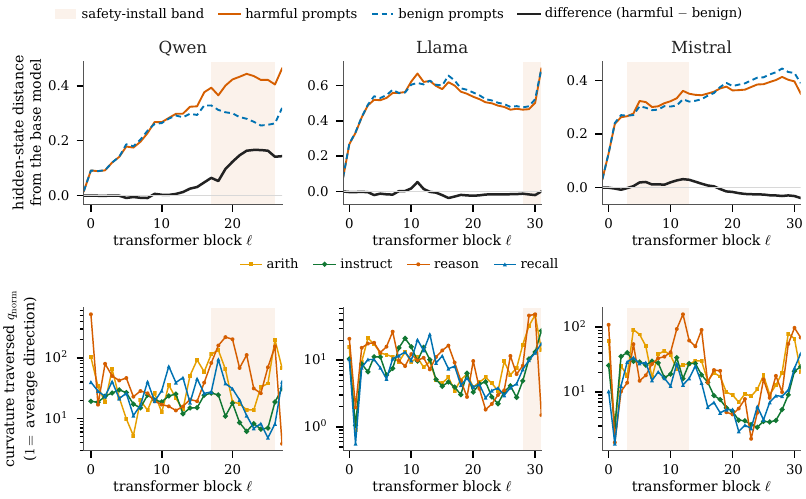}
\caption{Layer-localized signals across three model families (columns: Qwen-2.5-7B-Instruct, Llama-3-8B-Instruct, Mistral-7B-Instruct-v0.3), referenced from Sec.~\ref{sec:m2b}. The top row is Fig.~\ref{fig:m2b-divergence} in the main text, repeated here so the two rows can be read against each other. \textbf{Top row:} per-layer L2 divergence between base and instruct residual streams on harmful vs.\ benign topic-matched prompts ($n_\text{pairs}{=}50$ each), plus selective divergence (harm $-$ benign). \textbf{Bottom row:} per-layer $\qnorm^{(c)}$ for the four capability probes (log-scale). Shaded bands bracket each model's $\qnorm^\textsc{reason}$/$\qnorm^\textsc{arith}$ peaks (Qwen $\ell{=}17$ to $26$; Llama $\ell{=}28$ to $31$; Mistral $\ell{=}3$ to $13$). Selective-divergence magnitude scales with pre-attack refusal rate: $+0.166$ (Qwen, $93\%$), $+0.053$ (Llama, $49\%$), $+0.032$ (Mistral, $1.2\%$).}
\label{fig:m2b-layer}
\end{figure}

\paragraph{Learned-classifier scoring.} The same generations rescored by a harmful-output classifier are in App.~\ref{sec:app-detector}, which repeats both main-text attack tables under both detectors.

\begin{table}[h]
\centering
\caption{Qwen-2.5-7B $\to$ Qwen-2.5-7B-Instruct: per-probe overlap $\rho$ (mean / max over $28$ blocks), normalized Fisher-quadratic form $\qnorm$, raw-0-shot $\Delta \Lcap$ averaged over the $256$ gradient examples. Max $\qnorm$ is over blocks $1$ to $27$: block $0$, next to the embeddings, reaches $525.6$ on \textsc{reason} through raw-gradient magnitude alone and is excluded from every peak-$\qnorm$ figure in the paper except Table~\ref{tab:geom-3shot}, which reports the unexcluded argmax (App.~\ref{sec:app-geom}). Source for the prose summary in \S\ref{sec:m2b}.}
\label{tab:m2b-geom}
\setlength{\tabcolsep}{4.5pt}
\begin{tabular}{lcccccc}
\toprule
Probe & mean $\rho$ & max $\rho$ & mean $\qnorm$ & max $\qnorm$ (layer) & $\Delta \Lcap$ & Regime \\
\midrule
\textsc{reason}   & $1.4 \times 10^{-5}$ & $4.1 \times 10^{-5}$ & $76.2$ & $220.6\ (\ell{=}19)$ & $+3.45$ & suppression \\
\textsc{arith}    & $1.2 \times 10^{-5}$ & $2.9 \times 10^{-5}$ & $47.1$ & $198.4\ (\ell{=}26)$ & $+3.15$ & suppression \\
\textsc{recall}   & $1.2 \times 10^{-5}$ & $2.2 \times 10^{-5}$ & $29.8$ & $94.7\ (\ell{=}18)$  & $+1.07$ & suppression \\
\textsc{instruct} & $1.1 \times 10^{-5}$ & $2.1 \times 10^{-5}$ & $18.6$ & $31.3\ (\ell{=}27)$  & $+0.67$ & suppression \\
\bottomrule
\end{tabular}
\end{table}

\begin{table}[h]
\centering
\caption{Cross-model summary across three instruct-tuned $7$ to $8$B models. All replicate the suppression regime; the absolute $\qnorm$ band is family-specific; the behavioral attack number is a joint measurement of (refusal installed) $\times$ (robustness of that install). Peak $\qnorm$ excludes Qwen's block $0$ (Table~\ref{tab:m2b-geom}). Geometry only; the behavioral effect of the attack is Table~\ref{tab:m2b-attack} in the main text, measured in a single run under both detectors.}
\label{tab:m2b-cross}
\resizebox{\textwidth}{!}{%
\begin{tabular}{lccc}
\toprule
Signal & Qwen-2.5-7B-Instruct & Llama-3-8B-Instruct & Mistral-7B-Instruct-v0.3 \\
\midrule
mean $\rho$                                 & $1.2 \times 10^{-5}$  & $5.6 \times 10^{-6}$ & $1.2 \times 10^{-5}$ \\
peak $\rho / (k/d)$                         & $67\times$              & $21\times$             & $44\times$             \\
peak $\qnorm^{\textsc{reason}}$ (layer)     & $220.6\ (\ell{=}19)$  & $48.9\ (\ell{=}30)$  & $158.7\ (\ell{=}12)$ \\
peak $\qnorm^{\textsc{arith}}$ (layer)      & $198.4\ (\ell{=}26)$  & $47.7\ (\ell{=}30)$  & $92.1\ (\ell{=}4)$   \\
peak selective divergence                   & $+0.166\ (\ell{=}24)$ & $+0.053\ (\ell{=}12)$& $+0.032\ (\ell{=}13)$\\
max raw-0-shot $\Delta \Lcap$               & $+3.45$               & $+0.88$              & $+0.59$              \\
\bottomrule
\end{tabular}%
}
\end{table}

\paragraph{Llama-3-8B.} We use Llama-3-8B as $\Wbase$ and Llama-3-8B-Instruct as $\Wsafe$. $\rho$ runs from $2.8 \times 10^{-7}$ to $1.5 \times 10^{-5}$ across 32 blocks and four probes, up to $21\times$ the random floor $k/d \approx 7 \times 10^{-7}$; four early-block cells sit below it. $\qnorm$ peaks for \textsc{reason} and \textsc{arith} at blocks $30$ to $31$ ($48.9$ and $47.7$); much later than Qwen's mid-block peak. The $100$-example benign Alpaca fine-tune strips nearly all of its refusal (Table~\ref{tab:m2b-attack}; compliance NLL $6.18 \to 4.30$). Selective divergence peaks at $+0.053$ at block 12, roughly $3\times$ smaller than Qwen's $+0.166$, with late layers going slightly negative ($-0.02$), reflecting that only $49\%$ of harmful prompts actually engage the refusal circuit, so harm-vs-benign divergence is half-dominated by chat-style effects rather than refusal-specific ones. Raw-0-shot $\Delta \Lcap$ is small on Llama ($\textsc{reason} +0.20$, $\textsc{arith} +0.88$), well under the $+1.1$ 3-shot-neutral ceiling; Llama-3-Instruct is less chat-template-dependent than Qwen.

\paragraph{Mistral-7B-Instruct-v0.3.} $\rho$ runs from $1.1 \times 10^{-6}$ to $3.2 \times 10^{-5}$, $1.6$ to $44\times$ the random floor $k/d \approx 7 \times 10^{-7}$, mean $\rho = 1.2 \times 10^{-5}$. Peak-$\qnorm$ band is early-to-mid residual: $\qnorm^\textsc{reason}$ peaks at $158.7$ at $\ell = 12$; $\qnorm^\textsc{arith}$ at $92.1$ at $\ell = 4$ (see Fig.~\ref{fig:m2b-layer}, right column). All three model families land in the suppression regime, but the absolute layer of the sharp $\qnorm$ band is family-specific. Behaviorally, Mistral-7B-Instruct-v0.3 is an outlier: $1.2\%$ pre-attack refusal on AdvBench, so the Qi benign-fine-tuning attack has almost nothing to break. Post-attack refusal is $0.4\%$ (post-attack ASR $=99.6\%$ trivially). Compliance NLL \emph{rises} from $5.01$ to $6.13$: benign Alpaca SFT pushes Mistral-Instruct further from a naked ``Sure, here is'' continuation. With almost no refusal to remove, Mistral's attack numbers say little about fragility.

\paragraph{What the base already refuses.} $\Delta = \Wsafe - \Wbase$ reads as a safety \emph{install} only to the extent that $\Wbase$ refuses nothing, so we measure the three shipped bases directly under the same prompts and detectors as their Instruct counterparts (no chat template is applied to either, so the pair is matched). Llama and Mistral behave as assumed: at most $1.6\%$ by regex, and $7.0$ to $19.9\%$ by the classifier, which counts any completion it does not flag as harmful as a refusal. Qwen does not: its base already refuses $71.1\%$ of AdvBench by regex and $69.5\%$ by classifier, so its shipped $\Delta$ raises AdvBench refusal by about $22$ pp rather than creating it. This does not affect the geometry, which characterizes whatever $\Delta$ is, but it does mean the Qwen pair should be read as a partial install, and it is why the safety-only controls of Sec.~\ref{sec:impossibility} carry the weight they do. It also sharpens the fragility result: the attack leaves Qwen-Instruct at $58.2\%$ on AdvBench (Table~\ref{tab:m2b-attack}), \emph{below} the $71.1\%$ its own base refuses, so fine-tuning on $100$ benign examples does not merely undo the install but leaves the model more compliant than the checkpoint it was built from.

\begin{table}[!htbp]
\centering
\caption{Refusal (\%) of the three shipped \emph{base} models, before any install, under both detectors. Same prompts, decoding and scoring as Table~\ref{tab:m2b-attack}, with no chat template applied to either half of a pair. $n = 256/200/256$.}
\label{tab:base-refusal}
\setlength{\tabcolsep}{5pt}
\begin{tabular}{lcccccc}
\toprule
 & \multicolumn{2}{c}{AdvBench} & \multicolumn{2}{c}{HarmBench} & \multicolumn{2}{c}{StrongReject} \\
\cmidrule(lr){2-3}\cmidrule(lr){4-5}\cmidrule(lr){6-7}
Base model & regex & classifier & regex & classifier & regex & classifier \\
\midrule
Qwen-2.5-7B      & $\mathbf{71.1}$ & $\mathbf{69.5}$ & $31.5$ & $34.0$ & $28.5$ & $30.9$ \\
Llama-3-8B       & $0.0$ & $7.0$ & $1.0$ & $10.5$ & $1.6$ & $12.5$ \\
Mistral-7B-v0.3  & $0.0$ & $16.4$ & $0.0$ & $14.0$ & $0.8$ & $19.9$ \\
\bottomrule
\end{tabular}
\end{table}

\section{Chat-Template Ablation: Full Four-Condition Table}
\label{sec:app-cta}

\paragraph{Scoring convention.} Each probe defines a gold span that we score the NLL of, conditional on the prompt. For \textsc{reason} (MMLU \texttt{formal\_logic}) the gold span is a single letter (\texttt{" A"}, \texttt{" B"}, \texttt{" C"}, or \texttt{" D"}). For \textsc{arith} (GSM8K) it is the numeric answer as a suffix. For \textsc{recall} (TriviaQA) it is the answer span. For \textsc{instruct} (FLAN-v2) it is the gold completion. This is the same NLL-of-gold-span convention used by \texttt{lm-evaluation-harness} and HELM for MMLU-style benchmarks; the \emph{scoring} is standard.

\paragraph{Why $\Delta \Lcap$ can inflate across a base/instruct pair.} What is not automatically standard is using $L_\text{cap}$ as a \emph{capability-preservation diagnostic across two differently-aligned models}. The value $L_\text{cap}$ is sensitive to how each model emits the first post-prompt tokens. A base model trained on raw-completion corpora tends to continue \texttt{"...Answer:"} with the bare letter directly, since that matches its training distribution. An instruct model trained with chat-style responses often continues with \texttt{"The answer is A, because..."} or an explanatory preamble that puts a newline or word before the gold letter. The probability mass on \texttt{" A"} drops, $L_\text{cap}$ rises, and $\Delta L_\text{cap}$ looks large. The underlying knowledge is unchanged; only the output distribution's \emph{format} has shifted.

\paragraph{Ablation design.} We run each probe under four prompt formats on Qwen-2.5-7B and Qwen-2.5-7B-Instruct. \emph{raw (0-shot)}: the prompt ends at ``Answer:'' and we score the gold span directly. \emph{raw (3-shot)}: three worked examples precede the prompt (in Q/A pairs), then the query, following a standard few-shot convention. \emph{chat (0-shot)}: the prompt is wrapped in Qwen's chat template. \emph{chat (3-shot)}: chat template plus three in-context demonstrations. Table~\ref{tab:m2b-cta} reports $\Delta L_\text{cap} = L_\text{cap}(\Wsafe) - L_\text{cap}(\Wbase)$ per condition.

\begin{table}[h]
\centering
\caption{Chat-template ablation on Qwen-2.5-7B. Four prompt formats on the same four capability probes, $n=64$ examples per probe, three demonstrations in the 3-shot columns. The raw 0-shot column uses a different sample from Table~\ref{tab:m2b-geom}, which averages over the $256$ gradient examples, so the two differ by up to $0.21$ nats (\textsc{instruct}). 3-shot collapses the gap by $40$ to $74\%$ on \textsc{reason}, \textsc{arith} and \textsc{recall}, and by $20\%$ on \textsc{instruct}, whose raw gap is already small, by showing the Instruct model the raw-completion format via in-context examples. Chat formatting \emph{inflates} the gap because we score NLL of a bare gold token that Instruct no longer naturally emits after its chat template; 3-shot partially recovers by demonstrating the target format in context.}
\label{tab:m2b-cta}
\begin{tabular}{lcccc}
\toprule
Probe     & raw (0-shot) & raw (3-shot) & chat (0-shot) & chat (3-shot) \\
\midrule
\textsc{reason}    & $+3.35$ & $+0.87$ & $+21.01$ & $+11.07$ \\
\textsc{arith}     & $+3.06$ & $+1.10$ & $+15.19$ & $+9.14$  \\
\textsc{recall}    & $+0.90$ & $+0.54$ & $+5.39$  & $+0.84$  \\
\textsc{instruct}  & $+0.46$ & $+0.37$ & $+3.16$  & $+2.99$  \\
\bottomrule
\end{tabular}
\end{table}

\paragraph{Reading the table.} The raw-0-shot column (first data column) reproduces the ordering and approximate size of the $\Delta \Lcap$ values in Table~\ref{tab:m2b-geom}, on a smaller and different sample. The raw-3-shot column shows the gap collapse to $+0.37$ to $+1.10$ nats across probes; this is our ``format-neutralized'' estimate of the true capability-loss shift and is small enough to be consistent with the geometric claim $\rho \ll \rho_\text{part}$. The chat columns go in the opposite direction: with the Instruct model's expected chat template applied, the NLL of the bare gold token \emph{rises} further, because the Instruct model even more strongly prefers to emit a preamble under its native format. This inversion indicates that the raw-0-shot $\Delta \Lcap$ in Table~\ref{tab:m2b-geom} is largely a format-shift effect rather than a capability shift.
\section{Erasure is Reachable: Adversarial-SFT Demonstration}
\label{sec:app-erasure-demo}

None of the safety-training recipes studied in the main body crosses $\rho_{\text{part}}$. To establish that the erasure regime is not a definition without referents, we run a stress test that engineers an update which should, by construction, land inside $\Fcap$'s top-$k$ eigenspace: supervised fine-tuning on (prompt, wrong-answer) pairs drawn from the capability probe itself. The gradient of NLL on a wrong gold token lies in the same directions the probe's Fisher picks out, so in-subspace mass should accumulate with training steps. This is a deliberately contrived recipe and not a realistic alignment procedure; its purpose is a boundary-reachability check.

\paragraph{Setup.} Base model: Pythia-1.4B \texttt{step143000}, matching \S\ref{sec:m1}. Two target probes are studied in parallel, one per independent adversarial-SFT run:
\begin{itemize}
    \item \emph{Target=}\textsc{reason}: MMLU \texttt{formal\_logic}, gold letter replaced by a uniformly-sampled non-gold letter. $2000$ pairs, $2$ epochs, LR $2 \times 10^{-5}$, AdamW, bf16.
    \item \emph{Target=}\textsc{arith}: GSM8K training split, numeric gold answer perturbed by a small random offset preserving magnitude. Same $2000$/$2$/$2\mathrm{e}{-5}$ schedule.
\end{itemize}
The geometry instrument of \S\ref{sec:setup} is then applied to each resulting $\Wsafe$ against the Pythia-1.4B base: $N = 128$ per-example gradients, $k = 128$ per-layer subspace, evaluated on all four capability probes (target and three off-target).

\paragraph{Result: target-probe $\rho$ reaches the $\rho_{\text{part}}$ boundary.} Table~\ref{tab:erasure-demo} summarizes. On the target probe, max $\rho$ reaches $1.04 \times 10^{-2}$ (target=\textsc{reason}), at $\rho_{\text{part}}$, and $3.01 \times 10^{-3}$ (target=\textsc{arith}). Mean $\rho$ on target is $3.33 \times 10^{-3}$ and $1.16 \times 10^{-3}$ respectively, $10$ to $30\times$ the SFT+DPO sub-pattern (b) values in \S\ref{sec:m3-matched}. $\qnorm$ on target reaches $33{,}945$ (reason) and $15{,}064$ (arith), more than an order of magnitude above the pre-substrate OLMo $\qnorm^{\textsc{arith}}$ of $863$. The $\rho_{\text{part}} = 10^{-2}$ threshold is thus empirically anchored: it lies exactly where an update built to saturate $\Fcap$'s top-$k$ in fact reaches.

\begin{table}[h]
\centering
\scriptsize
\renewcommand{\arraystretch}{1.05}
\begin{tabular}{@{}lllllll@{}}
\toprule
target probe & probe under eval & mean $\rho$ & max $\rho$ & max $\qnorm$ & $\Delta\Lcap$ & target? \\ \midrule
\textsc{reason} & \textsc{reason}   & $3.33 \times 10^{-3}$ & $\mathbf{1.04 \times 10^{-2}}$ & $33945$ & $-0.96$ & $\checkmark$ \\
\textsc{reason} & \textsc{arith}    & $1.16 \times 10^{-3}$ & $4.66 \times 10^{-3}$ & $15773$ & $-0.62$ & \\
\textsc{reason} & \textsc{recall}   & $4.97 \times 10^{-4}$ & $1.49 \times 10^{-3}$ & $1895$  & $-0.35$ & \\
\textsc{reason} & \textsc{instruct} & $4.12 \times 10^{-4}$ & $1.33 \times 10^{-3}$ & $328$   & $+0.15$ & \\ \midrule
\textsc{arith}  & \textsc{arith}    & $1.16 \times 10^{-3}$ & $3.01 \times 10^{-3}$ & $15064$ & $-2.08$ & $\checkmark$ \\
\textsc{arith}  & \textsc{reason}   & $3.47 \times 10^{-4}$ & $8.95 \times 10^{-4}$ & $2956$  & $-0.72$ & \\
\textsc{arith}  & \textsc{recall}   & $2.78 \times 10^{-4}$ & $8.28 \times 10^{-4}$ & $1151$  & $-0.38$ & \\
\textsc{arith}  & \textsc{instruct} & $1.84 \times 10^{-4}$ & $8.18 \times 10^{-4}$ & $316$   & $+0.23$ & \\
\bottomrule
\end{tabular}
\caption{Adversarial-SFT erasure demonstration on Pythia-1.4B \texttt{step143000}. One SFT run per target probe; geometry measured against the base on all four probes. Max $\rho$ on the target probe reaches the $\rho_{\text{part}} = 10^{-2}$ boundary for target=\textsc{reason} and comes within about $3\times$ of it for target=\textsc{arith}; $\qnorm$ on target is $17$ to $39\times$ the largest value seen under DPO/RLHF ($863$, pre-substrate OLMo). Off-target probes see $3$ to $8\times$ lower mean $\rho$ than the target, establishing target specificity. $\Delta\Lcap$ is NLL-based; on capability probes with a stable wrong-token format (\textsc{reason}, \textsc{arith}), adversarial SFT \emph{lowers} NLL because format-learning dominates at this intervention scale (see caveat below). $\rho_{\text{rand}} = 3.75 \times 10^{-6}$ throughout.}
\label{tab:erasure-demo}
\end{table}

\paragraph{Target specificity.} Off-target probes see material but consistently lower $\rho$ than the target probe: target=\textsc{reason} elevates \textsc{arith} to $1.16 \times 10^{-3}$ (2.9$\times$ below target \textsc{reason}) and \textsc{recall}/\textsc{instruct} near $4$ to $5 \times 10^{-4}$ (6 to 8$\times$ below). Target=\textsc{arith} elevates \textsc{reason} to $3.47 \times 10^{-4}$, \textsc{recall} to $2.78 \times 10^{-4}$. The ratio $\rho^\text{target} / \rho^\text{off-target}$ is stable across both runs, supporting a probe-local interpretation of $\Fcap$'s top-$k$: the eigendirections carrying reasoning and arithmetic overlap but are not identical, and an update engineered for one is not trivially an update for the other. This matters for downstream unlearning: an adversarial-SFT-style erasure on one skill is not a proxy for blanket capability removal.

\paragraph{NLL-vs-argmax caveat.} The $\Delta\Lcap$ column shows that NLL on the target probe \emph{decreases} after adversarial SFT ($-0.96$ on \textsc{reason}, $-2.08$ on \textsc{arith}), the opposite of what a literal capability-loss reading would suggest. This is the format-learning effect: a Pythia-1.4B base emits letter and digit tokens after ``Answer:'' with nontrivial entropy, and $2000$ demonstrations of \emph{any} single-token answer (correct or not) sharpen the format distribution enough to lower NLL on the bare gold token. The intervention sharpens the format while corrupting content; argmax accuracy would drop where NLL improves. For consistency with the geometry machinery elsewhere in the paper we report NLL here, but the behavioral interpretation should be: adversarial SFT installs a large in-subspace perturbation ($\rho \to \rho_{\text{part}}$), the lemma's erasure prediction applies (first-order capability change is permitted; the sign depends on what was optimized), and the NLL metric becomes unreliable because it does not factor format from content. A cleaner probe for future work is top-1 accuracy on held-out target-probe questions.

\paragraph{Interpretation.} The result closes the boundary-reachability question without claiming that adversarial SFT is a practical unlearning method: it is not; it is a deliberately over-specified recipe designed to populate the erasure regime empirically. The takeaway for the main text is structural: $\rho_{\text{part}} = 10^{-2}$ is not an arbitrary threshold but the value that an update built to saturate $\Fcap$'s top-$k$ eigenspace at this scale in fact reaches, and every suppression observation in the paper sits at least $90\times$ below it as a block mean, and $13\times$ below it at its peak-$\rho$ block. The qualitative regime separation between suppression and erasure is therefore well-anchored on both sides.
\section{Full Proof of Lemma \ref{lemma:1} (Kernel Immobility)}
\label{sec:app-proof}

We restate the lemma and give a full proof.

\begingroup
\renewcommand{\thelemma}{\ref*{lemma:1}}
\begin{lemma}[Kernel immobility, restated]
Let $\Lcap^{(c)}: \mathbb{R}^d \to \mathbb{R}$ be a capability loss, twice continuously differentiable on a neighborhood $\mathcal{N}$ of $\Wbase$, with Lipschitz Hessian on $\mathcal{N}$. Let
\[
\Fcap^{(c)} := \mathbb{E}_{(x,y) \sim \mathcal{D}_c}\!\left[\nabla_\theta \log p_\theta(y|x)\,\nabla_\theta \log p_\theta(y|x)^\top\right]\Big|_{\theta = \Wbase}
\]
be the empirical Fisher information matrix. Let $E := H(\Wbase) - \Fcap^{(c)}$ denote the Gauss-Newton residual. If $\Delta \in \ker(\Fcap^{(c)})$, then
\[
\Lcap^{(c)}(\Wbase + \Delta) - \Lcap^{(c)}(\Wbase) \;=\; \tfrac{1}{2}\Delta^\top E\, \Delta \;+\; O(\|\Delta\|_2^3).
\]
Under exact Gauss-Newton ($E = 0$, e.g., at a well-specified mature checkpoint where $p_\theta(\cdot \mid \Wbase)$ matches $\mathcal{D}_c$~\citep{martens2014gn}), this strengthens to $O(\|\Delta\|_2^3)$. Either way, no first-order post-hoc update can change $\Lcap^{(c)}$ at leading order.
\end{lemma}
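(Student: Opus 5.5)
The plan is a second-order Taylor expansion of $\Lcap^{(c)}$ around $\Wbase$ with an integral-form remainder, followed by showing that the kernel condition kills the first-order term and the Fisher part of the quadratic term.

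\textbf{Taylor expansion.} Write $H := \nabla^2\Lcap^{(c)}(\Wbase)$ and $g := \nabla\Lcap^{(c)}(\Wbase)$. Since the Hessian is $L$-Lipschitz on $\mathcal{N}$, for $\Wbase+\Delta\in\mathcal{N}$ we get
\[
\Lcap^{(c)}(\Wbase+\Delta)-\Lcap^{(c)}(\Wbase)=g^\top\Delta+\tfrac12\Delta^\top H\Delta+R(\Delta),\qquad |R(\Delta)|\le \tfrac{L}{6}\|\Delta\|_2^3 .
\]
The remainder bound is standard: write the difference as $\int_0^1(1-t)\,\Delta^\top[H(\Wbase+t\Delta)-H]\Delta\,dt$ and bound the bracket by $Lt\|\Delta\|$.

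\textbf{First-order term.} I would show $g^\top\Delta=0$ from the kernel condition alone, with no stationarity assumption. Let $s(x,y):=\nabla_\theta\log p_\theta(y|x)$ at $\Wbase$. Then $\Lcap^{(c)}=-\mathbb{E}[\log p_\theta(y|x)]$ gives $g=-\mathbb{E}[s]$. If $\Fcap^{(c)}\Delta=0$, then
\[
0=\Delta^\top\Fcap^{(c)}\Delta=\mathbb{E}[(s^\top\Delta)^2],
\]
so $s^\top\Delta=0$ almost surely over $\mathcal{D}_c$. In the empirical case this holds for every sample. Hence $g^\top\Delta=-\mathbb{E}[s^\top\Delta]=0$. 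I would state explicitly that this uses the same distribution (the empirical one) for both $\Lcap$ and $\Fcap$.

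\textbf{Second-order term.} Substitute $H=\Fcap^{(c)}+E$. Because $\Fcap^{(c)}\Delta=0$, we have $\Delta^\top H\Delta=\Delta^\top E\Delta$. This gives the stated identity with the $O(\|\Delta\|_2^3)$ remainder. Under exact Gauss-Newton ($E=0$) only the cubic remainder survives. The final sentence of the lemma follows because both the linear term and the Fisher-quadratic term vanish, so no first-order change in $\Lcap^{(c)}$ is possible. One could also note that $E=-\mathbb{E}[\nabla^2\log p]-\Fcap^{(c)}$, which vanishes in expectation under well-specification by the information-matrix equality; I would cite this rather than prove it, since the lemma takes $E=0$ as a hypothesis.

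\textbf{Main obstacle.} The delicate step is the first-order cancellation. It requires a positive-semidefiniteness argument that turns $\Delta\in\ker\Fcap^{(c)}$ into per-sample orthogonality $s^\top\Delta=0$. It also requires matching the data distribution defining $\Lcap^{(c)}$ to the one defining $\Fcap^{(c)}$. If $\Lcap^{(c)}$ were a population loss while $\Fcap^{(c)}$ were empirical, the argument would break, so I would fix both as expectations over the same $\mathcal{D}_c$. A secondary check is that the segment $\Wbase+t\Delta$ stays in $\mathcal{N}$, which holds for $\|\Delta\|$ small, consistent with the asymptotic $O(\cdot)$ statement.
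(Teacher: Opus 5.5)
Your proposal is correct and follows essentially the same route as the paper's proof: a Taylor expansion with a Lipschitz-Hessian cubic remainder, then the positive-semidefiniteness argument that turns $\Delta \in \ker(\Fcap^{(c)})$ into per-sample orthogonality, so the linear term vanishes without any stationarity assumption, then the split $H = \Fcap^{(c)} + E$ so that only $\tfrac12 \Delta^\top E \Delta$ survives. Your integral form of the remainder is what actually justifies the $\tfrac{L}{6}$ constant; the paper cites the Lagrange form, which under a $C^2$ hypothesis gives only $\tfrac{L}{2}$, though this does not affect the $O(\|\Delta\|_2^3)$ conclusion.
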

\addtocounter{lemma}{-1}
\endgroup

\begin{proof}
\textbf{Step 1 (Taylor expansion).} By Taylor's theorem with Lagrange remainder, for any $\Delta$ with $\Wbase + \Delta \in \mathcal{N}$,
\[
\Lcap^{(c)}(\Wbase + \Delta) = \Lcap^{(c)}(\Wbase) + \nabla \Lcap^{(c)}(\Wbase)^\top \Delta + \frac{1}{2}\Delta^\top H(\Wbase) \Delta + R(\Delta),
\]
where $H(\Wbase) = \nabla^2 \Lcap^{(c)}(\Wbase)$ is the Hessian and the remainder satisfies $|R(\Delta)| \le \frac{L}{6}\|\Delta\|_2^3$ by the Lipschitz-Hessian assumption.

\textbf{Step 2 (linear term vanishes from the kernel condition).} The kernel hypothesis $\Delta \in \ker(\Fcap^{(c)})$ gives $\Delta^\top \Fcap^{(c)} \Delta = \mathbb{E}_{(x,y) \sim \mathcal{D}_c}\!\left[(\Delta^\top \nabla \log p_\theta(y|x))^2\right] = 0$. As an expectation of a nonnegative quantity, this forces $\Delta^\top \nabla \log p_\theta(y|x) = 0$ for $\mathcal{D}_c$-almost every $(x,y)$. Taking expectations again,
\[
\Delta^\top \nabla \Lcap^{(c)}(\Wbase) \;=\; -\,\Delta^\top \mathbb{E}_{(x,y) \sim \mathcal{D}_c}\!\left[\nabla \log p_\theta(y|x)\right] \;=\; 0.
\]
The linear Taylor term thus vanishes from the kernel condition alone, irrespective of whether $\Wbase$ is a stationary point of $\Lcap^{(c)}$.

\textbf{Step 3 (Hessian, Fisher, and the Gauss-Newton residual).} For the likelihood loss $\Lcap^{(c)} = -\mathbb{E}_{(x,y) \sim \mathcal{D}_c}[\log p_\theta(y|x)]$, the calculus identity $\nabla^2 \log p = \nabla^2 p / p - \nabla \log p (\nabla \log p)^\top$ gives
\[
H(\Wbase) \;=\; -\,\mathbb{E}\!\left[\nabla^2 \log p\right] \;=\; \mathbb{E}\!\left[\nabla \log p \, \nabla \log p^\top\right] \;-\; \mathbb{E}\!\left[\nabla^2 p / p\right] \;=\; \Fcap^{(c)} \;-\; \mathbb{E}\!\left[\nabla^2 p / p\right].
\]
Bartlett's second identity gives $\mathbb{E}_{p_\theta}[\nabla^2 p / p] = \int \nabla^2 p \,\mathrm{d}y = \nabla^2 \!\int\! p \,\mathrm{d}y = 0$ when expectations are taken under the model's own output distribution. When $p_\theta(\cdot \mid \Wbase)$ is well-specified for $\mathcal{D}_c$ (which holds at a well-trained mature checkpoint), the Gauss-Newton residual $E := H(\Wbase) - \Fcap^{(c)} = -\mathbb{E}_{\mathcal{D}_c}[\nabla^2 p / p]$ vanishes; this is the standard Gauss-Newton approximation~\citep{martens2014gn}. In general $E$ is a finite matrix.

\textbf{Step 4 (combining).} Substituting Step 2 into Step 1 and decomposing $H = \Fcap^{(c)} + E$,
\[
\Lcap^{(c)}(\Wbase + \Delta) - \Lcap^{(c)}(\Wbase) \;=\; \tfrac{1}{2}\Delta^\top \Fcap^{(c)} \Delta \;+\; \tfrac{1}{2}\Delta^\top E\, \Delta \;+\; R(\Delta).
\]
The kernel hypothesis kills the first term ($\Delta^\top \Fcap^{(c)} \Delta = 0$). The Lipschitz-Hessian assumption gives $|R(\Delta)| \le \tfrac{L}{6}\|\Delta\|_2^3$, so
\[
\Lcap^{(c)}(\Wbase + \Delta) - \Lcap^{(c)}(\Wbase) \;=\; \tfrac{1}{2}\Delta^\top E\, \Delta \;+\; O(\|\Delta\|_2^3) \;=\; O(\|E\|_2 \cdot \|\Delta\|_2^2) \;+\; O(\|\Delta\|_2^3).
\]
Under exact Gauss-Newton ($E = 0$), this collapses to $O(\|\Delta\|_2^3)$.

\textbf{Step 5 (first-order corollary).} A first-order optimizer (SGD, Adam, AdamW) minimizing an auxiliary objective $\Lsafe$ produces an update of the form $\Delta = -\eta\, M \nabla \Lsafe + O(\eta^2)$, where $M$ is a (possibly diagonal, possibly identity) preconditioner. The lemma's premise is on $\Delta$ itself, not on $\nabla \Lsafe$ directly, so preconditioning does not weaken the conclusion: the lemma applies whenever the resulting $\Delta$ satisfies $\Delta \in \ker(\Fcap^{(c)})$, which is the condition we measure empirically through $\rho^{(c)} = \|\Delta_\parallel\|_2^2 / \|\Delta\|_2^2$.
\end{proof}

\paragraph{Remark.} The kernel condition $\Delta \in \ker(\Fcap^{(c)})$ does the work in Step~2 alone: the linear Taylor term vanishes regardless of whether $\Wbase$ is a stationary point of $\Lcap^{(c)}$. The local-minimum / well-specification assumption enters only at Step~3, where it underpins the Gauss-Newton approximation $H(\Wbase) \approx \Fcap^{(c)}$. Under-trained checkpoints (e.g., the pre-substrate revisions in Sec.~\ref{sec:m3-matched}) violate well-specification: $\|E\|$ is large there, the strong $O(\|\Delta\|^3)$ form fails, and even an in-kernel update can shift $\Lcap$ at $O(\|\Delta\|^2)$. The update measured there is also far from the kernel ($\qnorm^\textsc{arith}$ up to $863$), so the linear term can act as well. Both routes are consistent with the under-trained-base sub-pattern, in which safety training still moves $\Lcap$ directly. The post-substrate mature checkpoint where gradient optimization has converged is where $\|E\| \approx 0$ and the cubic bound applies; this is the regime the paper's headline claim sits in.
\section{Developmental-Axis Follow-Up: Alignment, Cross-Family Grad-Norm, Plateau Noise}
\label{sec:app-axis-followup}

We report three follow-up checks tied to claims in \S\ref{sec:m3} and \S\ref{sec:impossibility}: (i) the empirical alignment $\cos(\nabla \Lcap, \Delta)$ for a second, independent set of SFT$+$DPO updates at five OLMo-2-1B revisions; (ii) the gradient-norm instrument applied to Pythia-1.4B revisions (cross-family lemma sanity check); (iii) a multi-seed re-run of the substrate-divergence sweep to bound plateau noise. Table~\ref{tab:m3-substrate} first gives the full per-revision substrate sweep behind Fig.~\ref{fig:m3-substrate}.

\begin{table}[!htbp]
\centering
\footnotesize
\caption{Substrate sweep on OLMo-2-1B (full data behind Fig.~\ref{fig:m3-substrate}): $\Wbase$ swept across nine pretraining revisions spanning 0 to 4T tokens, $\Wsafe = \text{OLMo-2-1B-Instruct}$ fixed. The random floor for this model is $k/d \approx 2.6 \times 10^{-6}$. Selective divergence goes from noise at 1B to plateau by 63B pretraining tokens, with the first signal at 21B. $\ell^\star$ = the residual-stream position (stream after $\ell^\star$ blocks) with the largest selective divergence; reported as n/a at pre-substrate checkpoints where the curve is flat at noise ($\mathord{\sim}10^{-4}$) and argmax would label a random fluctuation rather than a locus.}
\label{tab:m3-substrate}
\begin{tabular}{rrccc}
\toprule
Step & Tokens & mean $\rho$ & max $\text{selective\_div}_\ell$ & $\ell^\star$ \\
\midrule
$0$                 & 0B    & $1.4 \times 10^{-6}$ & $+4 \times 10^{-4}$       & n/a \\
$300$               & 1B    & $1.6 \times 10^{-6}$ & $+4 \times 10^{-4}$       & n/a \\
$1 \times 10^4$     & 21B   & $1.8 \times 10^{-6}$ & $\mathbf{+2.7 \times 10^{-2}}$ & 16 \\
$3 \times 10^4$     & 63B   & $1.3 \times 10^{-6}$ & $\mathbf{+7.3 \times 10^{-2}}$ & 16 \\
$5 \times 10^4$     & 105B  & $1.0 \times 10^{-6}$ & $+7.7 \times 10^{-2}$     & 6 \\
$2 \times 10^5$     & 420B  & $9.3 \times 10^{-7}$ & $+7.8 \times 10^{-2}$     & 9 \\
$5 \times 10^5$     & 1049B & $9.1 \times 10^{-7}$ & $+4.9 \times 10^{-2}$     & 9 \\
$1 \times 10^6$     & 2098B & $8.3 \times 10^{-7}$ & $+5.9 \times 10^{-2}$     & 9 \\
$1.9 \times 10^6$   & 4001B & $3.4 \times 10^{-6}$ & $+7.1 \times 10^{-2}$     & 11 \\
\bottomrule
\end{tabular}
\end{table}

\paragraph{Gradient-orthogonality at the OLMo developmental axis.}
Table~\ref{tab:axis-followup-p1} reports $\cos(\nabla \Lcap^{(c)}(\Wbase), \Delta)$ averaged across the four probes, where $\Delta = \Wsafe - \Wbase$ is the SFT$+$DPO update at each revision, from the second run set described in the table caption. The relevant first-order capability-loss change per unit $\|\Delta\|^2$ is the linear coefficient $(\nabla \Lcap)^\top \Delta / \|\Delta\|^2$.

\begin{table}[!htbp]
\centering
\caption{Alignment of the safety update with the capability gradient at OLMo developmental-axis revisions. Mean is over the four capability probes; $\Delta$ is an SFT$+$DPO update with the recipe of App.~\ref{sec:app-m3-matched}, from a second, independent run set placed at 21B and 42B to resolve the transition, which shares only the 1B, 105B and 4001B revisions with the runs of Table~\ref{tab:m3-matched}. Across the entire axis $|\cos(\nabla \Lcap, \Delta)| < 0.002$, including the over-trained $4001$B-token endpoint where $\|\nabla \Lcap\|$ peaks at $29.6$. Translated into absolute nats, the linear Taylor term $(\nabla \Lcap)^\top \Delta = \cos\!\cdot\!\|\nabla \Lcap\|\!\cdot\!\|\Delta\|$ is at most $\sim\!0.11$ nats at the pre-substrate step300 revision and $\leq 0.06$ nats at post-substrate mature revisions: a fraction of $10^{-3}$ of the maximum-aligned upper bound $\|\nabla \Lcap\|\!\cdot\!\|\Delta\| \in [71, 260]$, confirming the kernel-induced gradient orthogonality empirically.}
\label{tab:axis-followup-p1}
\setlength{\tabcolsep}{4.5pt}
\begin{tabular}{lcccc}
\toprule
Revision & $\|\Delta\|$ & mean $\|\nabla \Lcap\|$ & mean $\cos(\nabla \Lcap, \Delta)$ & mean linear/$\|\Delta\|^2$ \\
\midrule
step300-tokens1B        & $12.69$ & $20.46$ & $-6 \times 10^{-4}$ & $-7 \times 10^{-4}$ \\
step10000-tokens21B     & $9.65$  & $13.69$ & $-5 \times 10^{-4}$ & $-6 \times 10^{-4}$ \\
step20000-tokens42B     & $9.11$  & $10.09$ & $-1 \times 10^{-4}$ & $-3 \times 10^{-5}$ \\
step50000-tokens105B    & $8.87$  & $7.97$  & $\phantom{-}0$ to $4$\,d.p. & $+3 \times 10^{-4}$ \\
step1907359-tokens4001B & $8.12$  & $29.56$ & $-4 \times 10^{-4}$ & $-6 \times 10^{-4}$ \\
\bottomrule
\end{tabular}
\end{table}

\paragraph{Safety-only controls (Table~\ref{tab:safety-only}).} The shipped deltas fold instruction-following and safety together, so Sec.~\ref{sec:impossibility} isolates safety with two updates trained on a fixed Qwen-2.5-1.5B base with no general instruction-tuning: a refusal-SFT-only update and a DPO-only update. Every cosine is below $2 \times 10^{-4}$ in magnitude, on both recipes and all four probes, although the two updates differ in size by a factor of $450$.

\begin{table}[!htbp]
\centering
\caption{Safety-only controls on Qwen-2.5-1.5B: $\cos(\nabla\Lcap, \Delta)$ per capability probe for a refusal-SFT-only and a DPO-only update, $N = 128$ per-example gradients. Neither update contains general instruction-tuning, so the orthogonality cannot come from it.}
\label{tab:safety-only}
\small
\setlength{\tabcolsep}{5pt}
\begin{tabular}{lccccc}
\toprule
Update & $\|\Delta\|$ & \textsc{reason} & \textsc{recall} & \textsc{arith} & \textsc{instruct} \\
\midrule
refusal-SFT only & $6.99$   & $-1.1 \times 10^{-4}$ & $+1.4 \times 10^{-5}$ & $+1.7 \times 10^{-4}$ & $-6.0 \times 10^{-5}$ \\
DPO only         & $0.0155$ & $-1.6 \times 10^{-4}$ & $+9.5 \times 10^{-5}$ & $+6.1 \times 10^{-5}$ & $-4.8 \times 10^{-5}$ \\
\bottomrule
\end{tabular}
\end{table}

\paragraph{Pythia-1.4B grad-norm cross-family replication.}
The U-shape in $\|\nabla \Lcap\|$ across OLMo-2-1B (Table~\ref{tab:axis-followup-p1}: large at $1$B and $4$T tokens, smallest at $105$B) might have been specific to OLMo-2-1B's $\sim\!200\times$-Chinchilla overtraining ratio. We re-apply the gradient-norm instrument across eight Pythia-1.4B pretraining revisions (Table~\ref{tab:axis-followup-p4}). \emph{The U-shape replicates}: $\|\nabla \Lcap\|$ falls from $119$ at \texttt{step512} to a minimum of $28.8$ at \texttt{step30000} ($\sim\!63$B tokens, $\sim\!45$ tokens/param, within about a factor of two of the compute-optimal ratio of $\sim\!20$), then rises again as Pythia is trained past compute-optimal, reaching $50.5$ at the final \texttt{step143000} ($\sim\!300$B tokens, $\sim\!210$ tokens/param, $\sim\!10\times$ Chinchilla). This argues against an OLMo-specific overtraining artifact; the U-shape looks like a generic property of pretraining, for which weight-decay-driven non-stationarity on either side of the compute-optimal point is one possible cause.

\begin{table}[!htbp]
\centering
\caption{$\|\nabla \Lcap\|$ across Pythia-1.4B pretraining revisions, mean over the four capability probes. The U-shape from OLMo replicates: minimum at \texttt{step30000} ($\sim\!63$B tokens, about twice the compute-optimal ratio), large at both extremes.}
\label{tab:axis-followup-p4}
\begin{tabular}{lcccccc}
\toprule
Revision & Tokens & mean $\|\nabla \Lcap\|$ & \textsc{reason} & \textsc{arith} & \textsc{recall} & \textsc{instruct} \\
\midrule
\texttt{step512}    & $1$B    & $\mathbf{119.5}$ & $146.7$ & $209.9$ & $98.7$ & $22.5$ \\
\texttt{step1000}   & $2$B    & $96.0$           & $100.5$ & $140.3$ & $128.5$ & $14.9$ \\
\texttt{step3000}   & $6$B    & $38.8$           & $51.5$  & $72.2$  & $22.3$ & $9.1$ \\
\texttt{step10000}  & $21$B   & $32.6$           & $50.4$  & $51.5$  & $21.0$ & $7.5$ \\
\texttt{step30000}  & $63$B   & $\mathbf{28.8}$  & $31.2$  & $56.1$  & $19.8$ & $8.2$ \\
\texttt{step60000}  & $126$B  & $32.7$           & $41.4$  & $59.3$  & $22.4$ & $7.8$ \\
\texttt{step100000} & $210$B  & $41.8$           & $51.0$  & $75.1$  & $30.5$ & $10.5$ \\
\texttt{step143000} & $300$B  & $\mathbf{50.5}$  & $63.0$  & $91.4$  & $35.0$ & $12.8$ \\
\bottomrule
\end{tabular}
\end{table}

\paragraph{Substrate-plateau noise floor.}
The single-seed substrate sweep (\S\ref{sec:m3}) showed apparent within-plateau bouncing ($\max \text{selective\_div}_\ell \in [0.04, 0.12]$ across $42$ to $210$B; argmax layer $\in \{6, 7, 8, 9, 16\}$) at a single deterministic prompt order ($n_\text{pairs}{=}50$ from a pool of $60$). We re-run with three shuffled seeds per revision (Table~\ref{tab:axis-followup-p5}). Within-revision standard deviation is $0.002$ to $0.010$, an order of magnitude smaller than the between-revision spread of $\sim\!0.08$. Argmax layer is stable across seeds (one-position drift only at $168$B, $210$B). The two peaks at $84$B and $147$B sit at least $2.5$ standard deviations above each neighbor, so they are real structure, not noise. The corrected reading: substrate engagement begins at $\sim\!42$B, reaches a first peak at $84$B, dips moderately at $126$B, peaks again at $147$B, and stabilizes at $\sim\!0.07$ from $168$B onward.

\begin{table}[!htbp]
\centering
\caption{Substrate divergence at $3$ shuffled-seed runs per revision (mean $\pm$ std). Argmax layers reported per-seed, in run order.}
\label{tab:axis-followup-p5}
\begin{tabular}{lcc}
\toprule
Revision & $\max \text{selective\_div}_\ell$ & argmax layer (per seed) \\
\midrule
\texttt{step20000-tokens42B}  & $0.043 \pm 0.002$ & $16, 16, 16$ \\
\texttt{step40000-tokens84B}  & $0.100 \pm 0.008$ & $9, 9, 9$    \\
\texttt{step60000-tokens126B} & $0.073 \pm 0.007$ & $6, 6, 6$    \\
\texttt{step70000-tokens147B} & $0.122 \pm 0.009$ & $6, 6, 6$    \\
\texttt{step80000-tokens168B} & $0.068 \pm 0.010$ & $7, 7, 6$    \\
\texttt{step90000-tokens189B} & $0.063 \pm 0.005$ & $8, 8, 8$    \\
\texttt{step100000-tokens210B}& $0.073 \pm 0.008$ & $7, 7, 6$    \\
\bottomrule
\end{tabular}
\end{table}
\section{Per-Revision DPO: Full Three-Recipe Table}
\label{sec:app-m3-matched}

\paragraph{Protocol.} We apply three DPO recipes of increasing compute to the same OLMo-2-1B revisions straddling the substrate transition. \emph{Weak}: HH-RLHF 5K pairs, 1 epoch, $\beta = 0.1$ (the \S\ref{sec:m1} recipe) at 5 revisions. \emph{Strong}: HH-RLHF 20K pairs, 2 epochs at 5 revisions. \emph{SFT+DPO}: a two-stage pipeline that SFTs on 20K Tulu-3-SFT-Mixture~\citep{tulu3} examples then DPOs on 20K HH-RLHF pairs for 2 epochs, at 5 revisions, a reduced-budget approximation of the shipped OLMo-2-1B-Instruct recipe (omitting the RLVR stage). All three measure $\rho^{(c)}(t)$, $\qnorm^{(c)}(t)$, selective divergence, and pre-/post-attack refusal per matched pair.

\begin{table}[h]
\centering
\caption{Per-revision DPO on OLMo-2-1B under three recipes of increasing strength. $\qnorm^\textsc{arith}$ is the per-recipe layer-max on the \textsc{arith} probe. All rows are in the suppression regime by absolute $\rho$. Weak/Strong show the under-trained-base sub-pattern: pre-substrate $\qnorm^\textsc{arith}$ ($620/863$ at $1$B) is more than an order of magnitude above any of their post-substrate values ($13$ to $48$). SFT+DPO shows the SFT-stage-recipe sub-pattern instead: at every post-substrate revision $\rho$ stays elevated ($6$ to $19\times$ the Weak recipe's) and $\qnorm^\textsc{arith}$ stays high ($283$ to $770$), reflecting the SFT-stage NLL gradient projecting onto $\Fcap$'s top-$k$; at $1$B the recipes coincide in $\rho$. $\rho$ is the mean over blocks; single blocks reach $7.4 \times 10^{-4}$ (SFT+DPO, $1049$B). Table~\ref{tab:axis-followup-p1} uses a second, independent SFT+DPO run set. Behavioral refusal appears only under SFT+DPO; Weak/Strong produce no measurable refusal at any revision (behavioral columns omitted). This is a weight-space geometry table, not the fragility result: the headline fragility claim is on Qwen/Llama (\S\ref{sec:m2b}), and the developmental behavioral points that exist show the attack succeeding (SFT+DPO: $21.9\% \to 0\%$ at $1$B, $42.2\% \to 15.6\%$ at $4001$B; the $105$B point, $4.7\% \to 3.1\%$, is within noise), so the table should not be read as ``no successful attack.'' Source for the prose summary in \S\ref{sec:m3-matched}.}
\label{tab:m3-matched}
\resizebox{\textwidth}{!}{%
\begin{tabular}{rrcccccccc}
\toprule
 & & \multicolumn{2}{c}{Weak} & \multicolumn{2}{c}{Strong} & \multicolumn{4}{c}{SFT+DPO} \\
\cmidrule(lr){3-4} \cmidrule(lr){5-6} \cmidrule(lr){7-10}
Step & Tokens & $\rho$ & $\qnorm^\textsc{arith}$ & $\rho$ & $\qnorm^\textsc{arith}$ & $\rho$ & $\qnorm^\textsc{arith}$ & pre-ref & post-ref \\
\midrule
$3 \times 10^{2}$   & 1B    & $8.1 \times 10^{-5}$ & $620$ & $9.5 \times 10^{-5}$ & $863$ & $8.3 \times 10^{-5}$ & $95$  & $21.9\%$ & $0\%$ \\
$3 \times 10^{4}$   & 63B   & $4.6 \times 10^{-6}$ & $23$  & $6.8 \times 10^{-6}$ & $34$  & $5.8 \times 10^{-5}$ & $603$ & $0\%$    & $0\%$ \\
$5 \times 10^{4}$   & 105B  & $4.3 \times 10^{-6}$ & $13$  & $6.6 \times 10^{-6}$ & $13$  & $8.0 \times 10^{-5}$ & $283$ & $4.7\%$  & $3.1\%$ \\
$5 \times 10^{5}$   & 1049B & $7.2 \times 10^{-6}$ & $33$  & $1.2 \times 10^{-5}$ & $40$  & $1.1 \times 10^{-4}$ & $770$ & $0\%$    & $0\%$ \\
$1.9 \times 10^{6}$ & 4001B & $9.8 \times 10^{-6}$ & $22$  & $1.8 \times 10^{-5}$ & $48$  & $5.9 \times 10^{-5}$ & $529$ & $\mathbf{42.2\%}$ & $\mathbf{15.6\%}$ \\
\bottomrule
\end{tabular}%
}
\end{table}
\section{Co-Training: Full Protocol and Per-Condition Tables}

\paragraph{Two sets of runs.} The four-condition and schedule tables in this appendix and in Table~\ref{tab:m4-schedule} come from the \emph{original} from-scratch runs, trained before the data-loading fix (App.~\ref{sec:app-details}) and evaluated on $64$ AdvBench prompts. The scale runs of Table~\ref{tab:m4-scale}, which the body quotes throughout, are separate from-scratch runs of \texttt{cotrain-continuous} and the LM-only \texttt{base}, trained with the corrected loader and scored on $256$ prompts. Where both exist the two are different models, not one model scored twice. At $410$M \texttt{cotrain-continuous} reads $\mathbf{84.4 \to 84.4\%}$ in the original run and $\mathbf{98.4 \to 84.0\%}$ in the scale run: the post-attack values agree and the pre-attack values do not. We take the scale run as the reference: its install erodes modestly rather than not at all, still well short of the $35$ to $38$ pp that post-hoc installs lose (Table~\ref{tab:m2b-attack}). At $1$B the two read $87.5 \to 90.6\%$ (Table~\ref{tab:m4-schedule}) and $87.1 \to 84.8\%$: the pre-attack values agree, and the post-attack gap is about four prompts at $n=64$, inside that sample's noise (App.~\ref{sec:app-details}).

\label{sec:app-m4}

\begin{table}[!htbp]
\centering
\caption{From-scratch Pythia-410M, 10B C4 tokens, four conditions (referenced from Sec.~\ref{sec:m4}). Refusal rate on 64 AdvBench prompts before and after the benign-SFT attack. This is the original run; the four-condition comparison is valid within it, but for \texttt{cotrain-continuous} the separate from-scratch run scored on 256 prompts gives $98.4 \to 84.0$ rather than the $0$ pp shift here, so the erosion is modest rather than absent. Capability is per-probe NLL, where \emph{lower is better}: \texttt{cotrain-continuous} improves on the LM-only base on three probes and is within $0.06$ nats of it on \textsc{instruct}. It is the only condition that installs refusal. This run predates the data-loading fix (App.~\ref{sec:app-details}). Capability of the models behind Table~\ref{tab:m4-scale} is in Table~\ref{tab:cap-scale}, where \textsc{arith} comes out $1.29$ nats worse at $410$M rather than $1.44$ better, a measure of that probe's run-to-run spread.}
\label{tab:m4-headline}
\setlength{\tabcolsep}{2.8pt}
\begin{tabular}{lccccccc}
\toprule
 & & & & \multicolumn{4}{c}{capability NLL} \\
\cmidrule(lr){5-8}
Condition & Pre-ref & Post-ref & Shift & \textsc{reason} & \textsc{arith} & \textsc{recall} & \textsc{instruct} \\
\midrule
\texttt{base}              & $0.0\%$            & $0.0\%$            & $0$ pp        & $5.34$ & $10.01$ & $6.42$ & $5.55$ \\
\texttt{cotrain-windowed}  & $0.0\%$            & $1.6\%$            & $+1.6$ pp     & $4.78$ & $9.11$  & $6.48$ & $5.61$ \\
\texttt{cotrain-continuous}& $\mathbf{84.4\%}$  & $\mathbf{84.4\%}$  & $\mathbf{0.0}$ pp & $4.95$ & $8.57$ & $6.39$ & $5.60$ \\
\texttt{post-hoc-dpo}      & $0.0\%$            & $0.0\%$            & $0$ pp        & $5.45$ & $10.00$ & $6.42$ & $5.55$ \\
\bottomrule
\end{tabular}
\end{table}

\paragraph{Capability at scale (Table~\ref{tab:cap-scale}).} Measured on the checkpoints behind Table~\ref{tab:m4-scale}, with the capability probes of \S\ref{sec:setup} kept per example so each co-trained model is paired with its base. Two kinds of difference appear. \textsc{reason} costs $0.12$ to $0.22$ nats at every scale from $1$B, and \textsc{recall} $0.13$ to $0.16$ at $1$B and $2.8$B, always in the same direction. \textsc{arith} moves by more but changes sign, and its base does not improve monotonically with scale ($8.59$, $7.97$, $8.34$, $8.46$), so at a $10$B-token budget a single run cannot resolve it. \textsc{instruct} shows no cost anywhere. At an NLL near $4.5$ on a one-letter answer the models place about $1\%$ on the correct letter, so \textsc{reason} here measures the probability of the first answer token more than reasoning; the cost is largest where the answer is shortest. One untested reading is that the refusal templates, which open every safety target, shift probability toward refusal openers at the start of any reply.

\begin{table}[!htbp]
\centering
\caption{Capability of the co-trained models at every scale: the eight from-scratch checkpoints behind Table~\ref{tab:m4-scale}, scored on the four capability probes (NLL of the gold answer in nats, natural-log units; lower is better). A cost of $c$ nats means the co-trained model gives the correct answer $e^{-c}$ times the base's probability, equivalently $e^{c}$ times the perplexity: $+0.2$ is about $18\%$ less probability, or $22\%$ higher perplexity. Each co-trained model is compared with its LM-only base on the same examples ($126$ for \textsc{reason}, the whole formal\_logic test split; $256$ for the others), with a $95\%$ paired-bootstrap interval over examples; bold marks an interval that excludes zero. Each condition is one training run, so the interval covers which examples were asked, not run-to-run variance.}
\label{tab:cap-scale}
\small
\setlength{\tabcolsep}{4pt}
\begin{tabular}{lcccc}
\toprule
 & 410M & 1B & 2.8B & 6.9B \\
\midrule
\multicolumn{5}{l}{\emph{LM-only base, NLL}} \\
\textsc{reason} & $4.72$ & $4.55$ & $4.52$ & $4.55$ \\
\textsc{arith} & $8.59$ & $7.97$ & $8.34$ & $8.46$ \\
\textsc{recall} & $6.10$ & $5.41$ & $5.07$ & $4.95$ \\
\textsc{instruct} & $6.08$ & $5.92$ & $5.82$ & $5.63$ \\
\midrule
\multicolumn{5}{l}{\emph{co-training cost, continuous $-$ base (nats), with $95\%$ interval}} \\
\textsc{reason} & $+0.02$ {\scriptsize$[-0.07, +0.11]$} & $\mathbf{+0.20}$ {\scriptsize$[+0.12, +0.29]$} & $\mathbf{+0.12}$ {\scriptsize$[+0.03, +0.22]$} & $\mathbf{+0.22}$ {\scriptsize$[+0.06, +0.38]$} \\
\textsc{arith} & $\mathbf{+1.29}$ {\scriptsize$[+1.17, +1.43]$} & $+0.05$ {\scriptsize$[-0.06, +0.15]$} & $\mathbf{+0.67}$ {\scriptsize$[+0.55, +0.79]$} & $\mathbf{-0.17}$ {\scriptsize$[-0.28, -0.05]$} \\
\textsc{recall} & $+0.03$ {\scriptsize$[-0.05, +0.11]$} & $\mathbf{+0.16}$ {\scriptsize$[+0.06, +0.24]$} & $\mathbf{+0.13}$ {\scriptsize$[+0.06, +0.20]$} & $+0.01$ {\scriptsize$[-0.08, +0.09]$} \\
\textsc{instruct} & $+0.04$ {\scriptsize$[-0.02, +0.11]$} & $0.00$ {\scriptsize$[-0.07, +0.06]$} & $-0.04$ {\scriptsize$[-0.10, +0.02]$} & $+0.03$ {\scriptsize$[-0.03, +0.09]$} \\
\bottomrule
\end{tabular}
\end{table}

\paragraph{Protocol (Pythia-$410$M from-scratch).} Random-initialized Pythia-$410$M architecture trained on streaming \texttt{allenai/c4} for $N \approx 10$B tokens, sequence length $2048$, micro-batch $8$ per rank, global batch $512$ sequences via gradient accumulation across $2$ ranks (DDP). AdamW with peak lr $3\times10^{-4}$, $\beta = (0.9, 0.95)$, weight decay $0.1$, gradient clipping $1.0$, cosine schedule with $1\%$ warmup. $\Lsafe$ is cross-entropy on (AdvBench harmful prompt, refusal-target) pairs: each prompt is paired round-robin with one of $5$ short refusal-template targets (``I can't help with that.'', ``As an AI assistant, I cannot \ldots'', etc.); during safety-active steps each step adds a microbatch of $2$ safety pairs weighted by $\lambda(s)$. The training and evaluation prompt pools are both drawn from the public AdvBench CSV under different sampling seeds and partially overlap by construction (training uses $256$ prompts from \texttt{Random(42)}, evaluation uses $64$ from \texttt{Random(101)}); this is a deliberate choice, the install we measure is whether the model reproduces \emph{the refusal-template distribution} on harmful prompts, not whether it has memorized prompt-specific responses, so prompt overlap is not a leakage path for that quantity. The two co-training schedules differ only in $\lambda(s)$:
\begin{itemize}
    \item \texttt{cotrain-windowed}: $\lambda(s) = 0.5$ for $s \in [0.10, 0.30]$, $\lambda(s) = 0$ otherwise.
    \item \texttt{cotrain-continuous}: $\lambda(s) = 0.5 \cdot (0.30 - 0.10) = 0.10$ for all $s$, equal total safety-loss weight to the windowed schedule.
\end{itemize}
\texttt{post-hoc-dpo} runs $5$K-pair HH-RLHF DPO on the \texttt{base} output. Attack: $200$-step / lr $2\times10^{-5}$ benign-SFT on $100$ Alpaca examples, evaluated on $64$ AdvBench prompts (same recipe as Sec.~\ref{sec:m2b}).

\paragraph{Protocol (scale runs).} The runs of Tables~\ref{tab:m4-scale}, \ref{tab:cap-scale} and~\ref{tab:rho-safe} use the same script and settings as above at every size ($10$B C4 tokens, sequence length $2048$, global batch $512$ sequences, peak lr $3\times10^{-4}$ with $1\%$ warmup and cosine decay, AdamW with weight decay $0.1$, gradient clipping $1.0$, $2$ safety pairs per micro-batch), with C4 sharded across data-parallel ranks. $410$M and $1$B train on $2$ GPUs, $2.8$B and $6.9$B on $4$. The learning rate is not retuned per size, and $10$B tokens is short of compute-optimal from $1$B up (about half the Chinchilla budget at $1$B, a fourteenth at $6.9$B), so the larger models are under-trained; both arms of each comparison share this.

\paragraph{Geometry across conditions (Table~\ref{tab:m4-geom}).} Each non-base condition's geometry is measured against \texttt{base} as $W_\text{base}$. All three lie in the suppression regime; the $(\rho, \qnorm)$ instrument does not distinguish behaviorally robust from behaviorally fragile installs at this scale.

\begin{table}[h]
\centering
\caption{Geometry of the safety direction at $410$M scale, measured against \texttt{base}. Mean over $96$ (layer, probe) cells. All three conditions sit in the suppression band; behaviorally, only \texttt{cotrain-continuous} resists the benign-SFT attack. Refusal is from the original $64$-prompt run; the separate scale run of \texttt{cotrain-continuous} reads $84.0\%$ at $n=256$, and the other two conditions have no scale run.}
\label{tab:m4-geom}
\setlength{\tabcolsep}{5pt}
\begin{tabular}{lccc}
\toprule
Condition & mean $\rho$ & mean $\qnorm$ & post-attack refusal \\
\midrule
\texttt{cotrain-continuous} (uniform) & $2.77 \times 10^{-5}$ & $2.31$ & $\mathbf{84.4\%}$ \\
\texttt{cotrain-windowed}             & $2.75 \times 10^{-5}$ & $2.27$ & $1.6\%$ \\
\texttt{post-hoc-dpo}                 & $3.01 \times 10^{-5}$ & $4.01$ & $0.0\%$ \\
\bottomrule
\end{tabular}
\end{table}

\paragraph{Where co-trained safety sits relative to capability (Table~\ref{tab:rho-safe}).} The $\Delta$-based instrument of Fig.~\ref{fig:overview}a cannot place a co-trained model, for the reason given in Limitation~(vii): no post-hoc displacement exists, and the only available $\Delta$ is a between-trajectory gap whose small $\rho$ is close to a geometric tautology. The natural substitute reads the safety gradient inside a single model and projects it onto that same model's $\Fcap$, giving $\rho_{\text{safe}}$ (Table~\ref{tab:rho-safe}). It cannot compare the two arms, because in them it does not measure the same thing. The LM-only \texttt{base} has not learned refusal, so its safety gradient is large and points toward installing it. \texttt{cotrain-continuous} has already fit the refusal templates, so its safety gradient is $10^4$ to $10^5$ times smaller, and the direction of a gradient that has nearly vanished says little about where the learned refusal sits. We therefore draw no conclusion from the change in $\rho_{\text{safe}}$ between arms, including its rise on \textsc{instruct}. The between-run numbers of Table~\ref{tab:m4-geom} show the same limit from the other side: at $410$M they put the robust \texttt{cotrain-continuous} and the failed \texttt{cotrain-windowed} within $2\%$ of each other in $\qnorm$ ($2.31$ against $2.27$) while the two differ by $83$ pp in refusal, so a geometric predictor of which schedule survives remains open.

\begin{table}[!htbp]
\centering
\caption{$\rho_{\text{safe}}$ ($\times 10^{-3}$), LM-only \texttt{base} $\to$ \texttt{cotrain-continuous} at the same scale, per capability probe: the safety gradient, taken within one model, projected onto that model's own $\Fcap$. Reported for completeness; the two arms are not comparable. The base's safety gradient points toward installing refusal, while the co-trained model, having already fit the refusal templates, has a safety gradient $10^4$ to $10^5$ times smaller in norm (last column), whose direction carries little information. Absolute values are not comparable to Fig.~\ref{fig:overview}a's $\rho$ or to $\rho_{\text{part}}$, which are calibrated for a displacement rather than a gradient.}
\label{tab:rho-safe}
\setlength{\tabcolsep}{4pt}
\begin{tabular}{lccccc}
\toprule
Scale & \textsc{reason} & \textsc{recall} & \textsc{arith} & \textsc{instruct} & $\|\nabla\Lsafe\|$ \\
\midrule
410M  & $0.93 \to 0.83$ & $1.62 \to 1.79$ & $0.88 \to 0.96$ & $3.56 \to 12.98$ & $10.0 \to 3.4{\times}10^{-4}$ \\
1B    & $1.53 \to 1.04$ & $2.48 \to 1.98$ & $1.86 \to 1.98$ & $5.81 \to 7.72$  & $4.7 \to 1.1{\times}10^{-4}$ \\
2.8B  & $1.35 \to 0.74$ & $1.98 \to 0.43$ & $1.75 \to 0.60$ & $3.56 \to 23.24$ & $2.0 \to 1.5{\times}10^{-5}$ \\
6.9B  & $2.97 \to 2.19$ & $3.81 \to 5.20$ & $3.79 \to 3.03$ & $7.05 \to 14.48$ & $1.1 \to 1.0{\times}10^{-5}$ \\
\bottomrule
\end{tabular}
\end{table}

\paragraph{Stronger attacks (Table~\ref{tab:m4-strong}).} The benign-SFT attack of the main text is one point in a family. We sweep its learning rate and step count on the $1$B \texttt{cotrain-continuous} model, and add the identity-shifting attack of \citet{qi2024finetuning} at the default setting. The worst case over the sweep leaves $75.0\%$ refusal, at the highest learning rate; the identity-shifting attack leaves $100\%$. Several cells end above the $87.5\%$ pre-attack value, within noise at $n = 64$.

\begin{table}[!htbp]
\centering
\caption{Stronger attacks on the $1$B \texttt{cotrain-continuous} model: post-attack refusal (\%, regex) on $64$ AdvBench prompts, pre-attack $87.5\%$. Benign-SFT on $100$ Alpaca examples at three learning rates and three step counts; the default attack of the main text is $2\times10^{-5}$ for $200$ steps. The identity-shifting attack runs at the default setting.}
\label{tab:m4-strong}
\small
\begin{tabular}{lccc}
\toprule
Learning rate & $200$ steps & $400$ steps & $800$ steps \\
\midrule
$2 \times 10^{-5}$ & $90.6$ & $90.6$ & $93.8$ \\
$5 \times 10^{-5}$ & $92.2$ & $85.9$ & $93.8$ \\
$1 \times 10^{-4}$ & $78.1$ & $\mathbf{75.0}$ & $84.4$ \\
\midrule
identity-shifting attack & \multicolumn{3}{c}{$100.0$} \\
\bottomrule
\end{tabular}
\end{table}

\paragraph{160M continued-training pilot.}
\label{sec:app-m4-160m}
We previously ran a smaller continued-training pilot at Pythia-$160$M scale (continued-training from the shipped checkpoint on streaming C4 for $N=2000$ AdamW steps at lr $5\times10^{-5}$, batch size $4$). Table~\ref{tab:m4-160m} reproduces its four-condition table for completeness; the headline pattern is the same: a $500$-step early window (\texttt{cotrain-early}) ends with no refusal, while a $500$-step late window (\texttt{cotrain-late}) installs refusal that the attack \emph{partially erodes}. The from-scratch $410$M result above generalizes the same finding: short-window co-training does not produce attack-robust refusal regardless of \emph{when} the window sits during training; only co-training that persists to the end of training does (Table~\ref{tab:m4-schedule}).

\begin{table}[h]
\centering
\caption{$160$M continued-training pilot for completeness. \texttt{cotrain-early} ends with no refusal after the LM-only steps that follow its window; \texttt{cotrain-late} installs refusal that the attack partially erodes; the two DPO-style conditions install no measurable refusal at $160$M (the 5K-pair DPO recipe does not engage). Refusal on $64$ AdvBench prompts; this pilot has no $256$-prompt rerun and no main-text counterpart.}
\label{tab:m4-160m}
\setlength{\tabcolsep}{5pt}
\begin{tabular}{lccc}
\toprule
Condition & Pre-attack refusal & Post-attack refusal & Refusal shift \\
\midrule
\texttt{baseline}       & $0.0\%$  & $0.0\%$  & $0.0$ pp \\
\texttt{cotrain-early}  & $0.0\%$  & $1.6\%$  & $+1.6$ pp \\
\texttt{cotrain-late}   & $\mathbf{84.4\%}$ & $\mathbf{15.6\%}$ & $\mathbf{-68.8}$ pp \\
\texttt{post-hoc-dpo}   & $0.0\%$  & $0.0\%$  & $0.0$ pp \\
\bottomrule
\end{tabular}
\end{table}

\section{Detector Cross-Check: Refusal Gate versus Harmful Output}
\label{sec:app-detector}

The main text scores refusal with a regex over refusal-prefix templates. That is the
quantity the geometry speaks to: $\Delta$ installs a refusal \emph{gate}, and the
suppression reading predicts that benign SFT perturbs the gate while leaving the
underlying capability routed to the original output distribution. A regex detects
whether the gate fires. It does not detect whether what follows is harmful.

Those two questions can come apart, so we rescore every generation behind
Tables~\ref{tab:m2b-attack} and~\ref{tab:m4-scale} with Llama-Guard-3-8B~\citep{inan2023llamaguard,grattafiori2024llama3}, a learned
classifier of harmful \emph{output}. Both detectors run on the same saved generations
in a single pass, so the columns of Tables~\ref{tab:m2b-attack-detector} and~\ref{tab:m4-scale-detector} are directly comparable. Table~\ref{tab:base-refusal}
already reports the shipped base models under both.

\begin{table}[!htbp]
\centering
\caption{Table~\ref{tab:m2b-attack} under both detectors: refusal (\%) before $\to$ after the same $100$-example benign-SFT attack on the three shipped instruct models. The regex columns are the main-text table. $n = 256/200/256$; the three cells that rise are within noise (App.~\ref{sec:app-details}).}
\label{tab:m2b-attack-detector}
\footnotesize
\setlength{\tabcolsep}{4pt}
\begin{tabular}{lcccccc}
\toprule
 & \multicolumn{2}{c}{AdvBench} & \multicolumn{2}{c}{HarmBench} & \multicolumn{2}{c}{StrongReject} \\
\cmidrule(lr){2-3}\cmidrule(lr){4-5}\cmidrule(lr){6-7}
Instruct model & regex & classifier & regex & classifier & regex & classifier \\
\midrule
Qwen-2.5-7B     & $93.0 \to 58.2$ & $93.0 \to 85.2$ & $67.5 \to 45.5$ & $74.0 \to 59.5$ & $52.3 \to 56.6$ & $72.3 \to 73.8$ \\
Llama-3-8B      & $49.2 \to 10.9$ & $39.5 \to 27.7$ & $46.5 \to 6.5$ & $49.0 \to 28.0$ & $46.5 \to 7.8$ & $53.9 \to 32.4$ \\
Mistral-7B-v0.3 & $1.2 \to 0.4$ & $12.1 \to 6.2$ & $1.5 \to 0.0$ & $7.0 \to 8.5$ & $3.5 \to 0.8$ & $23.0 \to 14.1$ \\
\bottomrule
\end{tabular}
\end{table}

\begin{table}[!htbp]
\centering
\caption{Table~\ref{tab:m4-scale} under both detectors: from-scratch \texttt{cotrain-continuous} refusal (\%), pre $\to$ post benign-SFT attack, by scale and prompt distribution. The regex columns are the main-text table. $n = 256/200/256$; changes under about $6$ pp are within noise (App.~\ref{sec:app-details}). The lower block is the LM-only base at each scale.}
\label{tab:m4-scale-detector}
\footnotesize
\setlength{\tabcolsep}{4.5pt}
\begin{tabular}{lcccccc}
\toprule
 & \multicolumn{2}{c}{AdvBench (in-dist.)} & \multicolumn{2}{c}{HarmBench (out)} & \multicolumn{2}{c}{StrongReject (out)} \\
\cmidrule(lr){2-3}\cmidrule(lr){4-5}\cmidrule(lr){6-7}
Scale & regex & classifier & regex & classifier & regex & classifier \\
\midrule
410M   & $98.4 \to 84.0$ & $91.8 \to 64.8$ & $58.0 \to 34.5$ & $57.0 \to 33.5$ & $8.2 \to 7.4$  & $25.8 \to 21.5$ \\
1B     & $87.1 \to 84.8$ & $83.2 \to 63.7$ & $56.0 \to 32.0$ & $54.0 \to 25.0$ & $7.8 \to 8.2$  & $23.0 \to 20.3$ \\
2.8B   & $96.1 \to 90.2$ & $91.0 \to 77.3$ & $45.5 \to 31.5$ & $49.0 \to 37.5$ & $8.6 \to 8.2$  & $31.2 \to 19.5$ \\
6.9B   & $\mathbf{96.9 \to 90.6}$ & $\mathbf{97.3 \to 86.7}$ & $32.5 \to 39.0$ & $33.0 \to 41.5$ & $9.0 \to 10.9$ & $29.7 \to 19.5$ \\
\midrule
\multicolumn{7}{l}{\emph{LM-only base}} \\
410M   & $0.4 \to 0.8$ & $6.6 \to 5.9$  & $1.5 \to 0.5$ & $10.0 \to 9.0$ & $0.0 \to 1.6$ & $14.8 \to 15.6$ \\
1B     & $0.0 \to 0.4$ & $9.0 \to 7.0$  & $0.0 \to 0.0$ & $12.0 \to 10.0$ & $0.4 \to 1.6$ & $21.5 \to 14.8$ \\
2.8B   & $0.4 \to 0.0$ & $6.2 \to 11.7$ & $0.0 \to 0.0$ & $12.5 \to 8.5$ & $0.0 \to 0.4$ & $24.6 \to 20.7$ \\
6.9B   & $0.0 \to 0.0$ & $10.9 \to 9.8$ & $0.5 \to 0.5$ & $13.0 \to 10.0$ & $0.0 \to 0.4$ & $24.6 \to 18.8$ \\
\bottomrule
\end{tabular}
\end{table}

\paragraph{What changes, and what does not.} The classifier preserves every ordering the
main text rests on. Post-hoc installs still lose far more of their refusal to the attack
than co-trained ones do, co-training that persists to the end of training is still the only kind that installs refusal
(\texttt{cotrain-continuous} $82.8\%$ and \texttt{cotrain-post-substrate} $85.9\%$, against $4.7\%$ for \texttt{base}, on the $1$B
schedule runs, Table~\ref{tab:m4-schedule-detector}), and the coverage gap between AdvBench and the two out-of-distribution benchmarks
still widens with scale.

What the classifier changes is the level, and it does so in both directions. It reads
Mistral and Llama as refusing more than the regex does before the attack (Mistral $1.2$
against $12.1\%$ on AdvBench), because those models decline in wordings the refusal-prefix
regex does not match. It reads the co-trained models as eroding more (\texttt{cotrain-continuous}
$82.8\%$ to $65.6\%$ on the $1$B schedule runs), because after the attack their refusal templates
sometimes survive as a prefix while the continuation drifts toward compliance. Both effects
are properties of the readout, not of the geometry: the gate and the output are different
quantities, and only the first is what a weight-space displacement can be expected to
predict. We report the regex in the main text for that reason, and report the classifier
here so the gap is visible.

\begin{table}[!htbp]
\centering
\caption{The $1$B schedule runs of Table~\ref{tab:m4-schedule} under both detectors: refusal (\%) before $\to$ after the benign-SFT attack, $64$ AdvBench prompts. The regex columns are the main-text table's $1$B column.}
\label{tab:m4-schedule-detector}
\small
\begin{tabular}{lcc}
\toprule
Condition (safety window) & regex & classifier \\
\midrule
\texttt{cotrain-continuous} $[0,1]$        & $87.5 \to 90.6$ & $82.8 \to 65.6$ \\
\texttt{cotrain-post-substrate} $[0.30,1]$ & $84.4 \to 84.4$ & $85.9 \to 76.6$ \\
\texttt{cotrain-windowed} $[0.10,0.30]$    & $3.1 \to 4.7$   & $6.2 \to 7.8$ \\
\texttt{base} (LM only)                    & $0.0 \to 0.0$   & $4.7 \to 7.8$ \\
\bottomrule
\end{tabular}
\end{table}

\section{Limitations and Broader Impact}
\label{sec:app-limits}

\paragraph{Limitations.} (i)~$\Fcap$ is approximated by its top-$k$ eigenspace; tail eigendirections carry some capability mass and are unmeasured, so a weight update with small top-$k$ projection may still traverse low-curvature but semantically important directions that our instrument misses. (ii)~Refusal in the main text is scored by a regex over refusal prefixes (``I can't'', ``as an AI''), which detects whether the refusal gate fires, not whether the output is harmful. App.~\ref{sec:app-detector} rescores every generation with Llama-Guard-3-8B: the classifier shifts levels in both directions, by up to $27$ pp (Qwen after the attack), but preserves every ordering the claims rest on, so they rest on relative ranking across models and conditions rather than absolute rates. (iii)~The fragility attack is Qi et al.'s benign-fine-tuning variant, the weakest of the three variants in that paper, so our ``attack succeeds'' claim is conservative. We separately verify that the continuous co-training install is robust to stronger attacks (at $1$B, Table~\ref{tab:m4-strong}): a benign-SFT sweep over learning rate and steps leaves worst-case post-attack refusal at $75\%$ and an identity-shifting attack leaves it at $100\%$; only the harmful-demonstration variant, which needs Qi et al.'s released data, is untested. (iv)~The developmental-axis behavioral curve is sparse: SFT$+$DPO installs substantial pre-attack refusal at only two of five revisions ($21.9\%$ at $1$B and $42.2\%$ at $4001$B, Table~\ref{tab:m3-matched}), too few for a per-revision curve; a denser curve would require a full Tulu-3 pipeline (SFT $\to$ DPO $\to$ RLVR) at each revision, out of compute budget. (v)~The co-training result (Sec.~\ref{sec:m4}) supports continuous-schedule pretraining-time co-training at $410$M from-scratch scale, but the head-to-head ``robustness at matched pre-attack refusal'' comparison against \texttt{post-hoc-dpo} is not directly available because the $5$K-pair DPO recipe installs no measurable refusal at this scale ($0\%$ pre-attack); a stronger post-hoc baseline (SFT$+$DPO or Tulu-3-style pipeline at $410$M) is needed to make the protection differential quantitative. We verify the result from scratch to $6.9$B (Table~\ref{tab:m4-scale}); beyond that it remains a compute-bounded conjecture, and the refusal it installs is in-distribution. (vi)~The geometric instrument runs to 8B; frontier-scale ($70$B$+$) behavior is a conjecture the framework makes but does not verify. (vii)~The $\Delta$-based instrument does not apply to the co-training conditions. It characterizes an \emph{edit} $\Delta = \Wsafe - \Wbase$ applied to a frozen, converged base: $\|\Delta\| = 0.0015$ for the post-hoc DPO at $410$M and $2.9$ to $7.8$ for the shipped $7$ to $8$B instruct deltas. A co-trained model was never a capable-but-unsafe base that received a patch, so no such $\Delta$ exists; the only available one is the gap between two complete from-scratch runs, $\|\Delta\| = 86.5$. The four conditions share a seed and data order and differ only by the $\lambda\Lsafe$ term, so that gap is attributable to the safety loss, but it is four to five orders of magnitude larger than the post-hoc DPO install at the same $410$M scale, $11$ to $30\times$ larger than the shipped $7$ to $8$B instruct deltas, and $7$ to $11\times$ larger than OLMo's SFT+DPO updates ($8.1$ to $12.7$, Table~\ref{tab:axis-followup-p1}), because adding the term sent the run down a different trajectory: it carries the divergence of the language-modeling path, not a localized safety displacement. Since $\rho$ is scale-free it can still be computed and comes out small ($\approx 3 \times 10^{-5}$), but in a high-dimensional space almost any direction has a small projection onto a $k$-dimensional subspace, so for a between-trajectory gap that number is close to a geometric tautology rather than evidence of a thin install. This is also why Lemma~\ref{lemma:1} does not bear on co-training: its premise is a post-hoc displacement from a converged base. Two measurements could settle whether co-trained safety is entangled with capability without differencing two runs. The first reads the safety gradient \emph{within} a single model and projects it onto that model's own $\Fcap$ (Table~\ref{tab:rho-safe}), but it cannot compare the two arms: a co-trained model has already fit the refusal objective, so its safety gradient is $10^4$ to $10^5$ times smaller than the LM-only base's, and the direction of a nearly vanished gradient says little about where the learned refusal sits. The second remains open: sweep attack strength and record capability alongside refusal, so that co-trained refusal falling only once capability falls with it would demonstrate an entanglement behaviorally. (viii)~Each co-training condition is one training run, so capability differences between conditions are confounded with run-to-run variance. On \textsc{arith} two independent $410$M runs disagree by $2.7$ nats (Tables~\ref{tab:m4-headline} and~\ref{tab:cap-scale}); only the \textsc{reason} and \textsc{recall} costs, consistent in sign across scales, are attributable to co-training.

\paragraph{What the instrument does not claim.} Beyond the Ethics Statement, one scope limit is worth stating: the instrument flags when an alignment procedure has landed in the suppression regime, a fine-tuning-robustness red flag, not when a model has been made unsafe by any behavioral standard. The two are different questions; only the first is measurable from weight-space geometry.
\section{Extended Related Work}
\label{sec:app-related}

\paragraph{Post-hoc alignment and its fragility.} RLHF and its direct-preference descendants~\citep{christiano2017rlhf,ouyang2022instructgpt,rafailov2023dpo} are the standard alignment recipe. \citet{qi2024finetuning} show that only 10 to 100 fine-tuning examples (even benign ones) compromise the safety of aligned GPT-3.5 and Llama-2-chat; the benign-fine-tuning variant is our behavioral attack. \citet{zhan2024removing} strip RLHF from GPT-4 with $340$ examples and $95\%$ ASR; \citet{andriushchenko2024jailbreaking} reach $100\%$ ASR on leading aligned models via simple adaptive attacks; \citet{paulus2024advprompter} drive suffix search two orders of magnitude faster. \citet{wei2023jailbroken} identify competing-objectives and mismatched-generalization as root causes; \citet{anil2024manyshot} push these to long-context regimes. Closest to our weight-space account: \citet{qi2024shallow} localize safety to the first few output tokens (``shallow alignment''); \citet{wei2024assessing} find ${\sim}3\%$ of parameters and $2.5\%$ of ranks carry the entire safety install; \citet{peng2024safetybasin} map a narrow ``safety basin'' in weight space around aligned checkpoints. Our contribution is the continuous-geometry account (Fisher-eigenspace regime classification) that these local-pruning / shallow-feature findings are special cases of. Mitigations span prompt-template defenses~\citep{huang2024lisa}, training-time regularization~\citep{lyu2024keeping}, perturbation-aware alignment~\citep{huang2024vaccine}, representation-level noise defenses~\citep{rosati2024representation}, and latent adversarial training~\citep{sheshadri2024latent}; if they too land in the suppression regime, which we do not test, they face the same first-order fragility the Lemma describes.

\paragraph{Mechanistic interpretability of alignment.} \citet{arditi2024refusal} show RLHF-installed refusal is mediated by a single residual-stream direction; \citet{wollschlager2025refusalcones} extend this to multi-directional concept cones, and ablation restores full compliance in both cases. \citet{lee2024mechanistic} reverse-engineer DPO on toxicity and argue it installs an overlay rather than removing learned features. \citet{zou2023representation,panickssery2024steering} steer behavior via contrastive activation vectors. \citet{templeton2024scaling,marks2024geometry,lieberum2024gemmascope,marks2024sparse} find high-level concepts linearly encoded in residual streams of frontier-class models and develop SAE / sparse-circuit tooling to isolate them causally. \citet{hubinger2024sleeper,greenblatt2024alignment} provide complementary behavioral evidence that safety training does not erase the underlying capability. Our per-layer $\rho$ and $\qnorm$ operationalize these activation-level intuitions at the weight level, making the ``install vs.\ remove'' distinction geometric and quantitative.

\paragraph{Loss-landscape geometry and Fisher methods.} The empirical Fisher matrix $\Fcap$ is standard in natural-gradient~\citep{amari1998natural,martens2014gn} and continual-learning literatures, most notably EWC~\citep{kirkpatrick2017overcoming}. We use the top-$k$ eigenspace (not the diagonal), which captures multi-parameter correlations that matter when $\Delta$ is structured. \citet{grosse2023influence} scale influence-function Fisher-Hessian estimates to $52$B-parameter models; our top-$k$ Gram trick is a lightweight per-layer variant. \citet{keskar2017sharp} relate flat minima to generalization; \citet{jacot2018ntk} give the fixed-kernel infinite-width limit; \citet{ortizjimenez2023tangent} show task arithmetic becomes exact in the NTK-linearized tangent space, providing the geometric foundation we extend to DPO. Task arithmetic and model merging~\citep{ilharco2022task,wortsman2022soups,yadav2023ties} treat safety- and task-vectors as weight-space objects; \citet{hu2022lora} establish the low-intrinsic-rank hypothesis for fine-tuning updates that underlies tractable eigenspace analysis at LLM scale. Our decomposition explains why safety vectors are approximately orthogonal to capability vectors in real-scale models.

\paragraph{Developmental phenomena.} \citet{achille2017critical} introduce critical learning periods in vision networks; \citet{frankle2020early} localize the most consequential training period to the first few epochs; \citet{kleinman2023critical} show the same arises in deep linear networks. \citet{constantinescu2024criticalperiod} transfer the biological critical-period analogy directly to language models and find catastrophic-forgetting windows concentrated in the first ${\sim}10\%$ of continued pretraining; our $1$B to $63$B substrate transition in OLMo-2-1B sits in the same developmental bracket. LLM emergent abilities~\citep{wei2022emergent,srivastava2023bigbench,schaeffer2023mirage} turn on sharply at scale- or data-dependent thresholds, and the quantization-model view~\citep{michaud2024quanta} describes them as discrete ``quanta'' acquired during training; Chinchilla-style scaling~\citep{hoffmann2022chinchilla} places our substrate transition ($1$B to $63$B tokens) around the compute-optimal budget for a $1$B model, about $20$B tokens. We connect these to safety: not only do capabilities emerge at a specific pretraining time, but the substrate that lets \emph{safety} bind to those capabilities has its own developmental transition.

\paragraph{Safety signal during pretraining.} \citet{korbak2023pretraining} inject preference feedback into pretraining objectives; \citet{bai2022constitutional} train from constitutional principles; \citet{zhou2023lima} argue alignment needs only a small high-quality SFT. Our co-training result (Sec.~\ref{sec:m4}) is complementary: a windowed-vs-continuous comparison at matched total safety-loss weight shows that \emph{persistence} of the safety signal across pretraining, not its timing, drives attack robustness.

\paragraph{Unlearning as contrast.} Machine-unlearning methods attempt to genuinely remove capability rather than install a refusal gate. \citet{li2024wmdp} introduce the WMDP benchmark and the RMU method, which randomizes representations for targeted harmful knowledge; \citet{lynch2024eight} benchmark eight unlearning methods and find capability is often reliably re-extractable. The contrast sharpens our framing: unlearning targets the erasure regime ($\rho \gtrsim \rho_\text{part}$), DPO/RLHF target the suppression regime; neither in isolation solves the fragility problem, consistent with Lemma~\ref{lemma:1} (erasure requires motion \emph{inside} the capability subspace, which suppression-regime DPO lacks).

\paragraph{Open-weight checkpoint suites.} Pythia~\citep{biderman2023pythia} pioneered intermediate-checkpoint releases; OLMo-2~\citep{olmo2_2025} ships $267$ checkpoints plus a Tulu-3~\citep{tulu3}-trained Instruct variant with real refusal behavior (the combination we need for the developmental axis). Cross-family replications use Qwen-2.5~\citep{yang2024qwen25}, Llama-3~\citep{grattafiori2024llama3}, and Mistral-7B-v0.3~\citep{jiang2023mistral}.

\end{document}